\newtheorem{theorem}{Theorem}
\definecolor{dpablue}{HTML}{2878b5}
\definecolor{dpacyan}{HTML}{9ac9db}
\definecolor{dpoorange}{HTML}{f8ac8c}
\definecolor{revblue}{HTML}{0000FF}
\definecolor{promptback}{HTML}{E7DAD2}
\definecolor{promptframe}{HTML}{82B0D2}
\pgfplotsset{compat=1.17}
\titlespacing{\section}{0pt}{*2}{*1}
\titlespacing{\subsection}{0pt}{*1.5}{*0.5}
\titlespacing{\subsubsection}{0pt}{*1}{*0.5}
\setlist{nosep}
\newcommand{\squishlist}{
\begin{list}{{{\small{$\bullet$}}}}
{\setlength{\itemsep}{3pt}      \setlength{\parsep}{1pt}
\setlength{\topsep}{1pt}       \setlength{\partopsep}{0pt}
\setlength{\leftmargin}{1em} \setlength{\labelwidth}{1em}
\setlength{\labelsep}{0.5em} } }
\newcommand{\squishend}{  \end{list}  }
\title{Robust Preference Alignment via Directional Neighborhood Consensus}
\author{
Ruochen Mao$^{1}$\thanks{Work done during a research internship at HKUST (GZ).} \quad
Yuling Shi$^{2}$ \quad
Xiaodong Gu$^{2}$ \quad
Jiaheng Wei$^{1,3}$\thanks{Corresponding author.} \\
$^{1}$The Hong Kong University of Science and Technology (Guangzhou) \\
$^{2}$Shanghai Jiao Tong University \qquad $^{3}$D5 Data \\
\texttt{ruochenmao@163.com, yuling.shi@sjtu.edu.cn,} \\
\texttt{xiaodong.gu@sjtu.edu.cn, jiahengwei@hkust-gz.edu.cn}
}
\begin{document}

\maketitle


\begin{abstract}
Aligning large language models with human preferences is critical for creating reliable and controllable AI systems. A human preference can be visualized as a high-dimensional vector where different directions represent trade-offs between desired attributes (e.g., helpfulness vs. verbosity). Yet, because the training data often reflects dominant, average preferences, LLMs tend to perform well on common requests but falls short in specific, individual needs. This mismatch creates a \textit{preference coverage gap}. Existing methods often address this through costly retraining, which may not be generalized to the full spectrum of diverse preferences. This brittleness means that when a user's request reflects a nuanced preference deviating from the training data's central tendency, model performance can degrade unpredictably. To address this challenge, we introduce Robust Preference Selection (RPS), a post-hoc, training-free method by leveraging \emph{directional neighborhood consensus}. Instead of forcing a model to generate a response from a single, highly specific preference, RPS samples multiple responses from a local neighborhood of related preferences to create a superior candidate pool. It then selects the response that best aligns with the user's original intent. We provide a theoretical framework {showing that, under mild conditions where (i) nearby preference directions correspond to better-trained regions of the model and (ii) the reward-model scores change smoothly with small angular changes in the preference vector, our neighborhood generation strategy yields a higher expected best score than} a strong baseline that also samples multiple candidates. Comprehensive experiments across three distinct alignment paradigms (DPA, DPO, and SFT) demonstrate that RPS consistently improves robustness against this baseline, achieving win rates of up to 69\% on challenging preferences from under-represented regions of the space without any model retraining. Our work presents a practical, theoretically-grounded solution for enhancing the reliability of preference-aligned models.
\end{abstract}

\section{Introduction}

Aligning large language models (LLMs) with human preferences is crucial for creating reliable and controllable AI systems \citep{ouyang2022instructgpt,Christiano_Leike_Brown_Martic_Legg_Amodei_2017,Ziegler_Stiennon_Wu_Brown_OpenAI_2020,zhuunmasking}. User preferences can be modeled in a multi-dimensional space where different directions represent trade-offs between desired attributes, such as helpfulness versus verbosity \citep{wang2024dpa, dong2023steerlm}. As illustrated in Figure~\ref{fig:preference_space}, this creates a foundational challenge: the \textbf{preference coverage gap}. While the space of potential user preferences is vast and diverse, as depicted in Figure~\ref{fig:preference_space}(a), the alignment process often optimizes for a dominant, average preference, meaning the training data is concentrated in a narrow region (Figure~\ref{fig:preference_space}(b)). This focus on average preferences makes models brittle; when faced with a user preference that reflects more individual needs and deviates from this central tendency—a common out-of-distribution (OOD) challenge—their performance can degrade unpredictably, undermining user trust \citep{Hendrycks_Burns_Basart_Zou_Mazeika_Song_Steinhardt_2020}.

\begin{figure}[h]
    \begin{minipage}[b]{0.5\textwidth}
        \centering
        \begin{subfigure}[b]{0.48\textwidth}
            \centering
            \footnotesize
            \begin{tikzpicture}[scale=1.5]
                \draw[->, thick] (-0.2,0) -- (1.2,0) node[right] {Helpfulness};
                \draw[->, thick] (0,-0.2) -- (0,1.2) node[above] {Verbosity};
                \draw[gray, dashed] (1,0) arc (0:90:1);
                \foreach \angle/\label in {15/{$\mathbf{v}_a$}, 45/{$\mathbf{v}_b$}, 75/{$\mathbf{v}_c$}} {
                    \draw[->, thick, black] (0,0) -- (\angle:1) node[pos=1.15] {\label};
                }
                \node[anchor=west] at (0.9, 0.6) {$\mathcal{V}_{\text{user}}$};
            \end{tikzpicture}
            \caption{The complete user preference space.}
            \label{fig:unit_circle_sub}
        \end{subfigure}
        \hfill
        \begin{subfigure}[b]{0.48\textwidth}
            \centering
            \footnotesize
            \begin{tikzpicture}[scale=1.5]
                \draw[->, thick] (-0.2,0) -- (1.2,0) node[right] {Helpfulness};
                \draw[->, thick] (0,-0.2) -- (0,1.2) node[above] {Verbosity};
                \draw[gray, dashed] (1,0) arc (0:90:1);
                \def\trainAngle{45}
                \fill[dpacyan, opacity=0.2] (0,0) -- (0:1) arc (0:\trainAngle:1) -- cycle;
                \node[dpablue] at (\trainAngle/3:1.25) {$\mathcal{V}_{\text{train}}$};
                \foreach \i in {1,...,8} {
                    \pgfmathsetmacro{\angle}{\i * 45 / 9}
                    \draw[->, dpablue] (0,0) -- (\angle:1.0);
                }

                \draw[->, thick, red] (0,0) -- (30:1) node[pos=1.2, red] {$\mathbf{v}_{\text{t1}}$};
                \draw[->, thick, red] (0,0) -- (40:1) node[pos=1.2, red] {$\mathbf{v}_{\text{t2}}$};
                \draw[->, thick, red] (0,0) -- (53:1) node[pos=1.2, red] {$\mathbf{v}_{\text{t3}}$};
                \draw[->, thick, red] (0,0) -- (67:1) node[pos=1.2, red] {$\mathbf{v}_{\text{t4}}$};
                \draw[->, thick, red] (0,0) -- (80:1) node[pos=1.2, red] {$\mathbf{v}_{\text{t5}}$};
            \end{tikzpicture}
            \caption{The sparse training preference set.}
            \label{fig:dpa_training_range_new_sub}
        \end{subfigure}
        \caption{Illustration of the preference coverage gap. While user preferences (a) can span the entire space, the model's training data (b) is often concentrated on dominant, average preferences, leaving individual needs in a sparse subset. This creates a gap where a user's target preference may lie.}
        \label{fig:preference_space}
    \end{minipage}%
    \hfill
    \begin{minipage}[b]{0.45\textwidth}
        \centering
        \begin{tikzpicture}[scale=2]
            \draw[->, thick] (-0.2,0) -- (1.2,0) node[right] {Helpfulness};
            \draw[->, thick] (0,-0.2) -- (0,1.2) node[above] {Verbosity};
            \draw[gray, dashed] (1,0) arc (0:90:1);
            \draw[->, ultra thick, black] (0,0) -- (30:1) node[pos=1.15] {$\mathbf{v}_{\text{target}}$};
            \def\numsamples{5}
            \def\angleoffset{20} 
            \foreach \i in {1,...,\numsamples}
            {
                \pgfmathsetmacro{\angle}{30 - \angleoffset/2 + (\i-1)*\angleoffset/(\numsamples-1)}
                \draw[->, thick, dpablue, dashed] (0,0) -- (\angle:0.9);
            }
            \draw[dpablue, fill=dpacyan, fill opacity=0.1] (0,0) -- (30-\angleoffset/2-5:0.6) arc (30-\angleoffset/2-5:30+\angleoffset/2+5:0.6) -- cycle;
            \node[dpablue, anchor=west] at (92:0.8) {Neighborhood $\mathcal{N}_k$};
        \end{tikzpicture}
        \caption{Conceptual visualization of RPS. Instead of relying on a single, potentially out-of-distribution target preference $\mathbf{v}_{\text{target}}$ (solid black arrow), RPS samples $k$ directions from its local neighborhood (dashed blue arrows). By generating responses from this diverse set, RPS can identify a response that better aligns with the user's true intent.}
        \label{fig:rps_conceptual}
    \end{minipage}
\end{figure}

To address this coverage gap, many existing solutions focus on training-time interventions. These include methods like data augmentation or the adoption of principles from Distributionally Robust Optimization (DRO) \citep{duchi_glynn_namkoong_2016, ben-tal_den,duchi_namkoong_2018} to create models that are resilient to shifts in preference distributions \citep{xu2025wdpo}. While effective, such approaches often require costly retraining cycles and may still fail to generalize to the full spectrum of diverse, individual preferences. This motivates an alternative question: can we enhance robustness at \textbf{inference time}, without any modification to the underlying model?

This paper argues that forcing a model to generate a response from a single, highly specific and less common preference direction is inherently fragile. We propose a paradigm shift from direct generation to one based on \textbf{directional neighborhood consensus}. As visualized in Figure~\ref{fig:rps_conceptual}, instead of attempting to extrapolate to a specific, under-represented preference point, it is more robust to explore the local neighborhood, generate responses from these more dominant, better-understood directions, and then select the one that best satisfies the original preference.

To realize this, we introduce \textbf{Robust Preference Selection (RPS)}, a post-hoc adjustment method that enhances preference alignment at inference time without any retraining. RPS first samples a set of candidate preference vectors from the neighborhood of the user's target preference. It then generates a response for each of these nearby vectors and, finally, uses the target preference itself as a criterion to select the optimal response from this diverse set. This approach effectively leverages the model's existing capabilities in well-trained regions of the preference space to satisfy requests in undertrained ones\footnote{Code and data available at \url{https://github.com/rcmao/robust-preference-alignment}.}.

Our contributions are threefold:
\squishlist
    \item We formally define the \textit{preference coverage gap} as a critical out-of-distribution (OOD) challenge that undermines the reliability of aligned LLMs. To address this, we introduce RPS, a novel, training-free method that enhances robustness through post-hoc adjustment without requiring any model modification.
    \item We propose RPS, a method grounded in neighborhood consensus, and provide a theoretical framework proving that its neighborhood generation strategy is superior to a strong multi-candidate baseline.
    \item We conduct extensive experiments across three distinct alignment paradigms (DPA, DPO, and SFT) and three datasets (\texttt{UltraFeedback}, \texttt{HelpSteer}, and \texttt{HelpSteer2}). Our results show that RPS consistently improves robustness, achieving win rates of up to 69\% on challenging OOD preferences and demonstrating its broad applicability.
\squishend

\section{Related Work}
\label{sec:related_work}

\subsection{Preference Alignment in Large Language Models}

Aligning LLM behavior with human preferences has become a central research area. Reinforcement Learning from Human Feedback (RLHF) is a pioneering pipeline that fine-tunes models with human preference rankings, as demonstrated by \citep{ouyang2022instructgpt}. However, RLHF compresses diverse user preferences into a single scalar reward and requires complex reward modeling plus reinforcement learning. To simplify this process, Direct Preference Optimization (DPO) was introduced \citep{rafailov2023dpo}, which recasts preference optimization as supervised classification, eliminating the need for explicit reward models. Subsequent generalizations explore divergence families and latent user heterogeneity \citep{wang2023fdpo, chidambaram2024heterodpo}, while others have proposed new theoretical paradigms for understanding preference learning \citep{Azar_Rowland_Piot_Guo_Calandriello_Valko_Munos_2023}. Moving beyond scalar objectives, Directional Preference Alignment (DPA) enables users to specify trade-offs in a multi-axis reward space \citep{wang2024dpa}. Similarly, SteerLM conditions supervised fine-tuning on attribute labels, exposing controllable style dimensions such as helpfulness or humor \citep{dong2023steerlm}. These methods are part of a broader research effort in controllable text generation, which aims to provide users with fine-grained control over model outputs \citep{Liang_Wang_Wang_Song_Yang_Niu_Hu_Liu_Yao_Xiong_et-al-2024}. Our work differs from these training-time approaches: rather than modifying the model weights, we focus on inference-time robustness to preference shifts through directional neighborhood consensus.

\subsection{Enhancing Robustness in Language Models}

While the alignment methods described above are powerful, a key challenge remains: models often remain brittle under out-of-distribution (OOD) preferences. Recent work has formalized preference distribution shifts and proposed distributionally robust objectives such as \citep{xu2025wdpo}, which strengthen resilience during training. Beyond alignment, the broader NLP community has highlighted the challenges of OOD generalization, with benchmarks such as \citep{yang2022gluex, yang2023oodnlp}. At inference time, an alternative approach is to use ensemble-like methods, a principle with deep roots in machine learning \citep{Dietterich_2000}. For instance, \citep{wang2022selfconsistency} shows that sampling diverse reasoning paths and aggregating their consensus yields more reliable results. The principle of post-hoc adjustment for robustness is also explored in other domains, such as classification, where scaling model outputs can mitigate the effects of distributional shifts \citep{wei2023distributionally}.

Extending this idea, recent inference-time alignment frameworks share our post-hoc perspective but differ in mechanism. Many rely on direct intervention in the generation process through token-level guidance or activation steering \citep{li2025fairsteer, shahriar2024inference}, or require auxiliary models for decoding-time guidance \citep{chehade2025satisficing, chandra2025pita}. In contrast, our RPS approach operates purely in the preference space. By leveraging neighborhood consensus to select an optimal response, it avoids direct manipulation of the model's internal states, offering a simpler and more black-box solution that requires no external guidance models.

\section{Problem Setup and Theoretical Framework}

We build upon the problem formulation of Directional Preference Alignment (DPA) \citep{wang2024dpa}. In this section, we formalize the preference alignment challenge by first defining the preference space and characterizing the coverage gap that causes model brittleness. We then establish the theoretical foundations for our proposed solution, Robust Preference Selection (RPS).

\subsection{Preference Space and Reward Model}

We model user preferences in a two-dimensional space for clarity of illustration, as depicted in Figure~\ref{fig:preference_space}(a), spanned by two key axes: \textbf{Helpfulness} and \textbf{Verbosity} \citep{wang2024dpa, dong2023steerlm}. While our experiments focus on this low-dimensional (2D) setting that is standard in current alignment work, the theoretical framework is stated for a general $d$-dimensional preference vector $\mathbf{v} \in \mathbb{S}^{d-1}$. To quantify these attributes, we formalize the notion of a reward vector.

\textbf{Definition (Reward Vector):} A reward model maps a prompt-response pair $(x, y)$ to a reward vector $\mathbf{r}(x, y) = (r_h(x, y), r_v(x, y)) \in \mathbb{R}^2$. The components $r_h(x, y)$ and $r_v(x, y)$ are scalar scores representing the helpfulness and verbosity of the response, respectively \citep{wang2024dpa}.

For all experiments, we use the publicly available \texttt{RewardModel-Mistral-7B-for-DPA-v1}\footnote{\url{https://huggingface.co/RLHFlow/RewardModel-Mistral-7B-for-DPA-v1}}; further details on the scoring procedure are provided in Appendix~\ref{sec:reward-scoring}.

A user's preference is represented as a normalized direction vector $\mathbf{v} = (v_h, v_v) \in \mathbb{S}^1$ on the unit circle, where $v_h$ and $v_v$ specify the desired weights for helpfulness and verbosity. This can be parameterized by an angle $\theta$, such that $\mathbf{v} = (\cos\theta, \sin\theta)$. The goal of a preference-aligned model is to generate a response $y$ that maximizes the projected reward: $\mathbf{v}^T \mathbf{r}(x, y)$. Our framework assumes that this reward model $\mathbf{r}(x, y)$ is well-calibrated and provides meaningful scores across the entire preference space, including for out-of-distribution directions.

\subsection{The Preference Coverage Problem}

The central challenge in preference alignment, as illustrated in Figure~\ref{fig:preference_space}, is the discrepancy between the vast space of user preferences and the limited coverage of the training data. We formalize this problem as follows:

\textbf{Definition 1 (User Preference Space).} Let $\mathcal{V}_{\text{user}}$ denote the complete set of all possible normalized preference vectors $\mathbf{v} \in \mathbb{S}^1$. This represents the entire spectrum of potential user preferences, as depicted in Figure~\ref{fig:preference_space}(a).

\textbf{Definition 2 (Training Preference Set).} Let $\mathcal{V}_{\text{train}} \subset \mathcal{V}_{\text{user}}$ be the subset of preference directions used during training, visualized as the concentrated region in Figure~\ref{fig:preference_space}(b). This set is often sampled from a constrained range.

\textbf{Definition 3 (Preference Coverage Gap).} The coverage gap, illustrated by the difference between the full space in Figure~\ref{fig:preference_space}(a) and the training data in Figure~\ref{fig:preference_space}(b), consists of all preference vectors that are not within an $\epsilon$-neighborhood of any training vector: $\text{Gap} = \mathcal{V}_{\text{user}} \setminus \mathcal{N}_\epsilon(\mathcal{V}_{\text{train}})$.

When a target preference $\mathbf{v}_{\text{target}}$ lies in this gap—as illustrated with the out-of-distribution vector in Figure~\ref{fig:preference_space}(b)—the model's performance is unreliable. Our goal is to develop a method that can robustly generate a high-quality response $y^*$ that maximizes user satisfaction, even for out-of-distribution preferences:
\begin{equation}
y^* = \arg\max_y \mathbf{v}_{\text{target}}^T \mathbf{r}(x, y),
\end{equation}
where $\mathbf{r}(x, y) = (r_h(x,y), r_v(x,y))$ represents the helpfulness and verbosity scores of response $y$ to prompt $x$. This challenge of performing well on a target preference $\mathbf{v}_{\text{target}}$ that lies in the gap can be framed through the lens of Distributionally Robust Optimization (DRO) \citep{duchi_glynn_namkoong_2016, ben-tal_den,duchi_namkoong_2018}. In the DRO paradigm, the objective is to find a policy that is robust not just to the empirical training distribution (represented by $\mathcal{V}_{\text{train}}$) but to a family of plausible test distributions. Our inference-time approach complements training-time DRO solutions by addressing this distributional shift post-hoc, at the point of generation.

\subsection{Neighborhood Consensus Theory}
\label{sec:neighborhood-consensus}

Instead of merely justifying the final selection step, our theoretical framework aims to explain why the entire neighborhood generation strategy is superior to a strong baseline that repeatedly samples from the target direction. The core intuition is that for an out-of-distribution (OOD) preference $\mathbf{v}_{\text{target}}$, the model's performance is degraded. By sampling from a nearby neighborhood of more in-distribution preferences, we can generate a candidate pool of higher average quality. The following assumption formalizes this intuition.

\textbf{Assumption 1 (OOD Performance Degradation).} Let $\mathbf{v}_{\text{target}}$ be an OOD preference vector. Let $\mathcal{D}_{\text{train}}$ be the distribution of preferences in the training set $\mathcal{V}_{\text{train}}$. For a nearby preference vector $\mathbf{v}_i \in \mathcal{N}_k(\mathbf{v}_{\text{target}})$ that is closer to the mean of $\mathcal{D}_{\text{train}}$, the expected score of a response $y_i \sim \pi_\theta(\cdot|x, \mathbf{v}_i)$ is higher than that of a response $y_{\text{target}} \sim \pi_\theta(\cdot|x, \mathbf{v}_{\text{target}})$, when both are evaluated against their respective generating preferences: $\mathbb{E}[\mathbf{v}_i^T \mathbf{r}(x, y_i)] > \mathbb{E}[\mathbf{v}_{\text{target}}^T \mathbf{r}(x, y_{\text{target}})]$.

Furthermore, we assume a \emph{local consistency} condition, meaning the evaluation of $y_i$ under $\mathbf{v}_{\text{target}}$ is a good proxy for its quality, i.e., $\mathbf{v}_{\text{target}}^T \mathbf{r}(x, y_i) \approx \mathbf{v}_i^T \mathbf{r}(x, y_i)$. This implies that the candidate pool from the neighborhood is stronger: $\mathbb{E}[\mathbf{v}_{\text{target}}^T \mathbf{r}(x, y_i)] > \mathbb{E}[\mathbf{v}_{\text{target}}^T \mathbf{r}(x, y_{\text{target}})]$. This condition is geometric and dimension-agnostic: as detailed in Appendix~\ref{sec:formal-theorem}, if reward vectors are uniformly bounded in $\ell_2$-norm, then the difference between projections along any two directions on $\mathbb{S}^{d-1}$ is automatically bounded by a constant times their angular distance and does not require densely covering a high-dimensional spherical cap. In particular, an $O(d)$-sized, structured neighborhood suffices to satisfy the angular condition in Assumption~2; see Appendix~\ref{sec:high-dim-neighborhood} for an explicit construction. In practice, local consistency is expected to hold whenever the reward landscape over directions is locally smooth at the scale of the neighborhood, so that small angular perturbations of $\mathbf{v}$ do not drastically reorder clearly good versus clearly bad responses. It can break down in regimes where preferences change abruptly with angle (e.g., hard safety thresholds) or when $\theta_{\max}$ is taken so large that directions are no longer nearby, in which case RPS should be viewed as a heuristic rather than enjoying a formal dominance guarantee. Empirically, we observe that this approximation holds well in our main 2D setting: on HelpSteer2 dataset, rotating a preference direction by $5^\circ$ or $10^\circ$ yields Pearson correlations of approximately $0.98$--$0.99$ between the projected scores $\mathbf{v}_i^\top \mathbf{r}(x,y)$ and $\mathbf{v}_{\text{target}}^\top \mathbf{r}(x,y)$ across models (see Appendix~\ref{sec:local-consistency-empirical}). To formalize this advantage, we compare RPS against a strong \textbf{baseline} strategy: generating $k$ independent responses by repeatedly sampling from the single target direction $\mathbf{v}_{\text{target}}$, and then selecting the best one according to the target preference. Informally, under Assumption~1 and this local consistency condition, the following theorem shows that the neighborhood-generation strategy yields a strictly higher expected best score than the baseline. A precise Lipschitz-type statement of local consistency (Assumption~2) and its role in the formal proof are given in Appendix~\ref{sec:formal-theorem} (Theorem~\ref{thm:rps-vs-baseline-formal}); we therefore view Theorem~1 as a conditional theoretical justification under Assumption~1 and local consistency, rather than as an unconditional global guarantee.

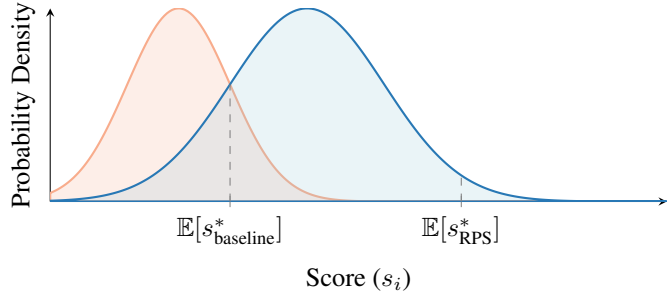
\begin{figure}[htbp]
\centering
\begin{tikzpicture}
\begin{axis}[
    axis lines=left,
    xlabel=Score ($s_i$),
    ylabel=Probability Density,
    ytick=\empty,
    xtick=\empty,
    extra x ticks={0.35, 0.8},
    extra x tick labels={$\mathbb{E}[s^*_{\text{baseline}}]$, $\mathbb{E}[s^*_{\text{RPS}}]$},
    extra x tick style={
        tick label style={anchor=north}
    },
    width=0.7\textwidth,
    height=4.15cm,
    xmin=0, xmax=1.2
]
\addplot[thick, domain=0:1, samples=100, smooth, dpoorange, fill=dpoorange, fill opacity=0.2] {exp(-((x-0.25)^2)/(2*0.1^2))} \closedcycle;
\addplot[thick, domain=0:1.2, samples=100, smooth, dpablue, fill=dpacyan, fill opacity=0.2] {exp(-((x-0.5)^2)/(2*0.15^2))} \closedcycle;

\draw[dashed, gray] (axis cs:0.35,0) -- (axis cs:0.35, {exp(-((0.35-0.25)^2)/(2*0.1^2))});
\draw[dashed, gray] (axis cs:0.8,0) -- (axis cs:0.8, {exp(-((0.8-0.5)^2)/(2*0.15^2))});

\draw[<->, thick, dpablue] (axis cs:0.35, -0.2) -- (axis cs:0.8, -0.2);
\node[dpablue, anchor=north] at (axis cs:0.575, -0.2) {Robustness Gain};

\end{axis}
\end{tikzpicture}
\caption{Conceptual illustration of Theorem 1. The distributions represent the scores of candidate responses for the baseline (orange) and RPS (blue). Under Assumption 1 and the local consistency condition, the RPS candidate pool is drawn from a higher-quality distribution. Consequently, the expected score of the best RPS response, $\mathbb{E}[s^*_{\text{RPS}}]$, is strictly greater than that of the best baseline response, $\mathbb{E}[s^*_{\text{baseline}}]$. This difference is the robustness gain.}
\label{fig:theorem_illustration}
\end{figure}

\begin{theorem}[\textbf{Superiority of Neighborhood Generation}]
Let $S_{\text{RPS}} = \{s_1, \ldots, s_k\}$ be the set of scores from $k$ responses generated from the neighborhood $\mathcal{N}_k$, where $s_i = \mathbf{v}_{\text{target}}^T \mathbf{r}(x, y_i)$. Let $S_{\text{Baseline}} = \{s'_1, \ldots, s'_k\}$ be the set of scores from $k$ responses generated from the baseline strategy (i.e., directly from $\mathbf{v}_{\text{target}}$). Under Assumption 1, the expected score of the best response selected by RPS is strictly greater than that of the best response selected by the baseline:
\begin{equation}
\mathbb{E}[\max(S_{\text{RPS}})] > \mathbb{E}[\max(S_{\text{Baseline}})].
\end{equation}
This performance gap, illustrated in Figure~\ref{fig:theorem_illustration}, represents the robustness gain of the RPS method.
\end{theorem}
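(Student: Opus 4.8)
The plan is to reduce the claim to a comparison between the maxima of two collections of i.i.d.\ random variables, and to recognize that the per-sample mean inequality supplied by Assumption~1 must be upgraded to a statement that survives the $\max$ operation. I would write the RPS scores $s_1,\dots,s_k$ as i.i.d.\ draws with common CDF $F_R$ and the baseline scores $s'_1,\dots,s'_k$ as i.i.d.\ draws with common CDF $F_B$, where both are measured in the fixed target direction $\mathbf{v}_{\text{target}}$. Assumption~1, together with its local-consistency proxy $\mathbf{v}_{\text{target}}^T \mathbf{r}(x,y_i) \approx \mathbf{v}_i^T \mathbf{r}(x,y_i)$, tells us that the mean of each RPS score exceeds the mean of each baseline score, i.e.\ $\mathbb{E}[s_i] > \mathbb{E}[s'_j]$.

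First I would make the order-statistics structure explicit. Because the samples within each group are independent, the CDFs of the two maxima are
\begin{equation}
\Pr[\max(S_{\text{RPS}}) \le t] = F_R(t)^k, \qquad \Pr[\max(S_{\text{Baseline}}) \le t] = F_B(t)^k.
\end{equation}
Using the layer-cake identity $\mathbb{E}[X] = \int_0^\infty \bigl(1 - F_X(t)\bigr)\,dt - \int_{-\infty}^0 F_X(t)\,dt$, it then suffices to establish the pointwise domination $F_R(t) \le F_B(t)$ for every $t$: this yields $F_R(t)^k \le F_B(t)^k$, hence a pointwise domination of the survival functions of the two maxima, and integrating gives $\mathbb{E}[\max(S_{\text{RPS}})] \ge \mathbb{E}[\max(S_{\text{Baseline}})]$, strict whenever the domination is strict on a set of positive measure. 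Equivalently, the $\max$ operator preserves first-order stochastic dominance (FSD), so the entire argument hinges on establishing FSD of the RPS score distribution over the baseline score distribution.

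The main obstacle is precisely this last step: Assumption~1 is a statement about \emph{means}, and a mean inequality alone does not imply an inequality between the expected maxima. A high-variance baseline distribution with a lower mean can nonetheless produce a larger maximum over $k$ samples, since a heavy upper tail dominates the order statistic; the gap between ``higher mean'' and ``higher expected max'' is therefore real and must be closed with additional structure. I would close it in the manner suggested by the Gaussian picture in Figure~\ref{fig:theorem_illustration}: model the two score distributions as members of a common location family, so that the quality improvement granted by Assumption~1 acts as a rightward shift, $F_R(t) = F_B(t - \Delta)$ with $\Delta > 0$. A strictly positive location shift is exactly equivalent to strict FSD, which feeds directly into the order-statistics argument above and delivers the strict inequality $\mathbb{E}[\max(S_{\text{RPS}})] > \mathbb{E}[\max(S_{\text{Baseline}})]$. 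The alternative, if one prefers to avoid a parametric commitment, is to restate Assumption~1 directly as an FSD hypothesis on the score distributions; either route makes the conclusion rigorous, and I would flag explicitly that the mean-only reading of the assumption is insufficient on its own.
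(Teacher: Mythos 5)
Your proposal is correct and follows the same core route as the paper's proof: first-order stochastic dominance (FSD) of each neighborhood score over each baseline score, preservation of FSD under the max via the product structure of the CDFs, and the layer-cake integration converting dominance of the survival functions into the expected-max inequality, with strictness obtained from strict dominance on a set of positive measure. One structural difference is minor and inessential: the paper allows the neighborhood scores to be independent but non-identically distributed (CDFs $F_1, \ldots, F_k$, so that $F_{\max}^{\text{RPS}}(x) = \prod_{i=1}^k F_i(x)$), whereas you take them i.i.d.\ with a common CDF $F_R$; the product argument goes through identically in both cases, and the paper's version is marginally more general. Where you genuinely diverge — to your credit — is at the step the paper glosses over. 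The paper simply asserts that under Assumption 1 ``each $s_i$ first-order stochastically dominates $s'_{\text{baseline}}$,'' but Assumption 1 as stated is only an inequality of \emph{expectations}, and a mean inequality does not imply FSD: exactly the heavy-upper-tail scenario you describe, in which a lower-mean baseline distribution can nonetheless yield a larger expected maximum over $k$ draws, shows the implication fails in general. Your explicit repair — either modeling the two score laws as a common location family with a rightward shift $F_R(t) = F_B(t - \Delta)$, $\Delta > 0$, or restating Assumption 1 directly as an FSD hypothesis on the score distributions — is precisely the additional structure needed to make the strict inequality rigorous. In short, your proof identifies and closes a real gap in the paper's own argument rather than containing one itself; the paper's proof is only valid under the strengthened (FSD) reading of Assumption 1 that you make explicit.
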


\begin{proof}
Let $s_i$ be the random variable for the score of a response from a neighborhood direction $\mathbf{v}_i \in \mathcal{N}_k$, and let $s'_{\text{baseline}}$ be the random variable for a score from the baseline direction $\mathbf{v}_{\text{target}}$. The scores in $S_{\text{RPS}} = \{s_1, \ldots, s_k\}$ are independent but not necessarily identically distributed, with cumulative distribution functions (CDFs) $F_1(x), \ldots, F_k(x)$. The scores in $S_{\text{Baseline}} = \{s'_1, \ldots, s'_k\}$ are independent and identically distributed (i.i.d.) draws from the baseline distribution, with CDF $F_{\text{baseline}}(x)$.

Under Assumption 1, each candidate from the neighborhood is drawn from a better distribution than a candidate from the baseline. This implies that each $s_i$ first-order stochastically dominates $s'_{\text{baseline}}$. Formally, for each $i \in \{1, \ldots, k\}$, we have $F_i(x) \leq F_{\text{baseline}}(x)$ for all $x$, with strict inequality over some interval.

The CDF of the maximum score from RPS is $F_{\max}^{\text{RPS}}(x) = P(\max(S_{\text{RPS}}) \le x) = \prod_{i=1}^k F_i(x)$, due to independence.
The CDF of the maximum score from the baseline is $F_{\max}^{\text{Baseline}}(x) = P(\max(S_{\text{Baseline}}) \le x) = (F_{\text{baseline}}(x))^k$.

Since $F_i(x) \leq F_{\text{baseline}}(x)$ for all $i$, it follows that $\prod_{i=1}^k F_i(x) \leq (F_{\text{baseline}}(x))^k$. Thus, $F_{\max}^{\text{RPS}}(x) \leq F_{\max}^{\text{Baseline}}(x)$ for all $x$. This shows that the maximum score from RPS also first-order stochastically dominates the maximum score from the baseline.

The expected value of a random variable can be expressed using its CDF. Assuming scores are non-negative (or shifted to be), $\mathbb{E}[X] = \int_{0}^{\infty} (1 - F(x)) dx$. Given the stochastic dominance:
\begin{equation}
\mathbb{E}[\max(S_{\text{RPS}})] = \int_{0}^{\infty} (1 - F_{\max}^{\text{RPS}}(x)) dx \geq \int_{0}^{\infty} (1 - F_{\max}^{\text{Baseline}}(x)) dx = \mathbb{E}[\max(S_{\text{Baseline}})].
\end{equation}
The inequality is strict because $F_i(x) < F_{\text{baseline}}(x)$ over some interval for at least one $i$, which ensures that $F_{\max}^{\text{RPS}}(x) < F_{\max}^{\text{Baseline}}(x)$ over that same interval. This rigorously confirms that leveraging the neighborhood produces a superior set of candidates, leading to a better final selection.
\end{proof}

\textbf{Corollary 1.} The robustness gain increases with neighborhood size $k$ and the quality gap between the neighborhood and target-direction candidate pools.
This follows because the expected value of the maximum of $k$ samples is non-decreasing in $k$, and this effect is more pronounced for the stochastically dominant RPS distribution. Similarly, a larger quality gap—meaning greater stochastic dominance of the neighborhood distributions over the baseline—naturally widens the separation in the expected maximums. A formal proof is provided in Appendix~\ref{sec:proof-corollary-1}.

\subsection{Robust Preference Selection Algorithm}

Building on the theoretical foundation of neighborhood consensus, we now formalize our approach. The Robust Preference Selection (RPS) algorithm, detailed in \textbf{Algorithm 1}, translates our theory into a practical, three-phase procedure designed to navigate the preference coverage gap.

The first phase, \textbf{Neighborhood Construction}, addresses the core challenge of out-of-distribution (OOD) preferences. Instead of directly using a potentially brittle target vector $\mathbf{v}_{\text{target}}$, RPS identifies a set of $k$ nearby, more reliable preference directions. These candidate directions are sampled within a predefined angular threshold $\theta_{\text{max}}$, forming a local neighborhood $\mathcal{N}_k$. This step is critical as it shifts the generation process from a region of high uncertainty to one where the model's performance is more robust and predictable.

In the \textbf{Multi-Directional Generation} phase, the language model $\pi_\theta$ generates a separate response $y_i$ for each of the $k$ preference vectors in the neighborhood. This process creates a diverse portfolio of candidate responses. Each response reflects a slightly different trade-off between attributes (e.g., helpfulness and verbosity), leveraging the model's well-trained capabilities within this local region of the preference space. The result is a set of high-quality outputs, each optimized for a direction where the model is confident.

Finally, the \textbf{Consensus Selection} phase determines the optimal response. Crucially, all $k$ candidates are evaluated against the user's original target preference, $\mathbf{v}_{\text{target}}$. The response $y_i$ that maximizes the projected reward score $s_i = \mathbf{v}_{\text{target}}^T \mathbf{r}(x, y_i)$ is selected as the final output $y^*$. The superiority of this entire procedure is justified by our \textbf{Theorem 1}, which proves that the strategy of generating candidates from a superior neighborhood pool and then selecting the maximum is guaranteed to yield a response with a higher expected quality than the strong baseline. By combining neighborhood-based generation with target-based selection, RPS robustly satisfies user intent even for OOD preferences. The following section will empirically validate the effectiveness of this approach across various models and datasets.

\begin{algorithm}[htbp]
\caption{Robust Preference Selection (RPS)}
\begin{algorithmic}[1]
\REQUIRE Prompt $x$, target preference $\mathbf{v}_{target}$, neighborhood size $k$, angle threshold $\theta_{max}$
\ENSURE Optimal response $y^*$

\STATE \textbf{Phase 1: Neighborhood Construction}
\STATE Generate candidate directions within $\theta_{max}$ of $\mathbf{v}_{target}$
\STATE Compute angular distances: $d_i = \arccos(\mathbf{v}_i \cdot \mathbf{v}_{target})$
\STATE Select $k$ closest directions: $\mathcal{N}_k = \{\mathbf{v}_1, \ldots, \mathbf{v}_k\}$

\STATE \textbf{Phase 2: Multi-Directional Generation}
\FOR{$i = 1$ to $k$}
    \STATE Generate response: $y_i \sim \pi_\theta(\cdot|x, \mathbf{v}_i)$
\ENDFOR

\STATE \textbf{Phase 3: Consensus Selection}
\FOR{$i = 1$ to $k$}
    \STATE Compute score: $s_i = \mathbf{v}_{target}^T \mathbf{r}(x, y_i)$
\ENDFOR
\STATE \textbf{return} $y^* = \arg\max_i s_i$
\end{algorithmic}
\end{algorithm}

\section{Experiments}
\label{sec:experiments}

To validate our theoretical framework, we designed a comprehensive experimental methodology to assess the effectiveness of Robust Preference Selection (RPS) as a post-hoc method. We evaluated RPS against a strong baseline across three distinct model training paradigms—DPA, DPO, and SFT—to demonstrate its general applicability. Our experiments test the core hypothesis that neighborhood consensus provides robustness for out-of-distribution preference directions.

\subsection{Experimental Setup}

\subsubsection{Models and Datasets}
To ensure a robust evaluation, we used a 3×3 experimental matrix, crossing three models with three standard preference-learning datasets. The models (Table~\ref{tab:models}) represent diverse training paradigms: Directional Preference Alignment (DPA), using \texttt{DPA-v1-Mistral-7B}\footnote{\url{https://huggingface.co/RLHFlow/DPA-v1-Mistral-7B}} \citep{wang2024dpa}; Direct Preference Optimization (DPO), using \texttt{Zephyr-7B-Beta}\footnote{\url{https://huggingface.co/HuggingFaceH4/zephyr-7b-beta}} \citep{tunstall2023zephyr}; and standard Supervised Fine-Tuning (SFT), using \texttt{Mistral-7B-Instruct-v0.2}\footnote{\url{https://huggingface.co/mistralai/Mistral-7B-Instruct-v0.2}} \citep{jiang2023mistral}. The datasets (Table~\ref{tab:datasets}) provide varied domains for testing preference alignment: we use the 2,000-sample \texttt{test\_prefs} split from \texttt{UltraFeedback}\footnote{\url{https://huggingface.co/datasets/HuggingFaceH4/ultrafeedback_binarized}} \citep{cui2023ultrafeedback}, the 503-sample deduplicated validation set from \texttt{HelpSteer}\footnote{\url{https://huggingface.co/datasets/nvidia/HelpSteer}} \citep{dong2023steerlm}, and the 518-sample deduplicated validation set from its successor, \texttt{HelpSteer2}\footnote{\url{https://huggingface.co/datasets/nvidia/HelpSteer2}} \citep{wang2024helpsteer2}.

\begin{table}[t]
\begin{minipage}{0.48\textwidth}
    \centering
    \caption{Models evaluated.}
    \label{tab:models}
    \begin{tabular}{ll}
    \toprule
    \textbf{Model} & \textbf{Training Paradigm} \\
    \midrule
    DPA-v1-Mistral-7B & DPA \\
    Zephyr-7B-Beta & DPO \\
    Mistral-7B-Instruct-v0.2 & SFT \\
    \bottomrule
    \end{tabular}
\end{minipage}\hfill
\begin{minipage}{0.48\textwidth}
    \centering
    \caption{Evaluation datasets.}
    \label{tab:datasets}
    \begin{tabular}{llr}
    \toprule
    \textbf{Dataset} & \textbf{Split} & \textbf{Size (used)} \\
    \midrule
    UltraFeedback & test\_prefs & 2,000 \\
    HelpSteer & validation & 503 \\
    HelpSteer2 & validation & 518 \\
    \bottomrule
    \end{tabular}
\end{minipage}
\end{table}

\subsubsection{Evaluation Protocol}

For each model-dataset pair, we compare two inference-time strategies under a fixed computational budget:
1) \textbf{Single-Direction Baseline:} To ensure a fair comparison, we generate $k=5$ response candidates using only the target direction $\mathbf{v}_{\text{target}}$ \citep{wang2022selfconsistency}. The best response is then selected by scoring each candidate with the target preference, i.e., maximizing $\mathbf{v}_{\text{target}}^T \mathbf{r}(x,y)$. 2) \textbf{RPS:} We first sample $k=5$ preference directions from a local neighborhood around $\mathbf{v}_{\text{target}}$, constrained by an angular threshold of $\theta_{\max}=30^{\circ}$. The choice of these hyperparameters balances key trade-offs. A neighborhood size of $k=5$ was chosen to maintain strict compute parity with the baseline, while representing a common choice for balancing response diversity and inference cost. The angle $\theta_{\max}=30^{\circ}$ was determined through preliminary pilots to be a sweet spot: smaller angles provided insufficient diversity over the baseline, while larger angles risked sampling preferences too semantically distant from the target, violating our local consistency assumption. We generate one response for each of the $k$ directions. The final response is selected by scoring all $k$ candidates against the original target preference $\mathbf{v}_{\text{target}}$.

This setup ensures that both methods generate and score the same number of candidate responses, maintaining strict compute parity, with the neighborhood sampling step introducing negligible overhead. We empirically confirm in Appendix~\ref{sec:compute-overhead} that under this shared candidate budget, RPS and the baseline have essentially identical peak VRAM usage and very similar per-prompt latency. All models receive preferences via a standardized system prompt (see Appendix~\ref{sec:prompts}). We evaluate on eight challenging preference directions from $10^{\circ}$ to $45^{\circ}$ (see Appendix~\ref{Preference Direction Specifications}) to test robustness on preferences progressively further from the training distribution. Response pairs are evaluated by a preference-aligned judge in a randomized A/B test, and our primary metric is the RPS win rate. We utilize GPT-4o-mini as our preference-aligned judge, a practice increasingly adopted for its strong correlation with human judgments in preference evaluation tasks \citep{zheng2023judging, gu2024survey}. In addition, we ran a complementary human preference study on HelpSteer2 using Amazon Mechanical Turk and an auxiliary evaluation with a second LLM judge, Claude Sonnet~4.5; see Appendix~\ref{sec:human-eval} and Appendix~\ref{sec:human-claude-results} for details.

\section{Results}
\label{sec:results}

\begin{figure}[t!]
\centering
\begin{subfigure}[t]{0.32\textwidth}
\centering
\includegraphics[width=\linewidth]{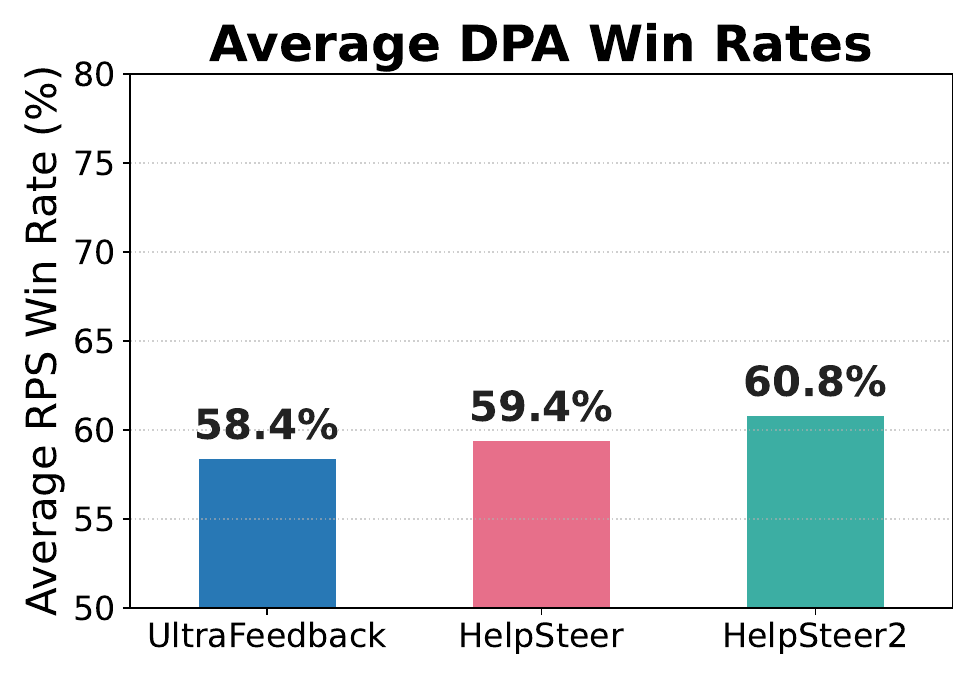}
\caption{DPA}
\end{subfigure}\hfill
\begin{subfigure}[t]{0.32\textwidth}
\centering
\includegraphics[width=\linewidth]{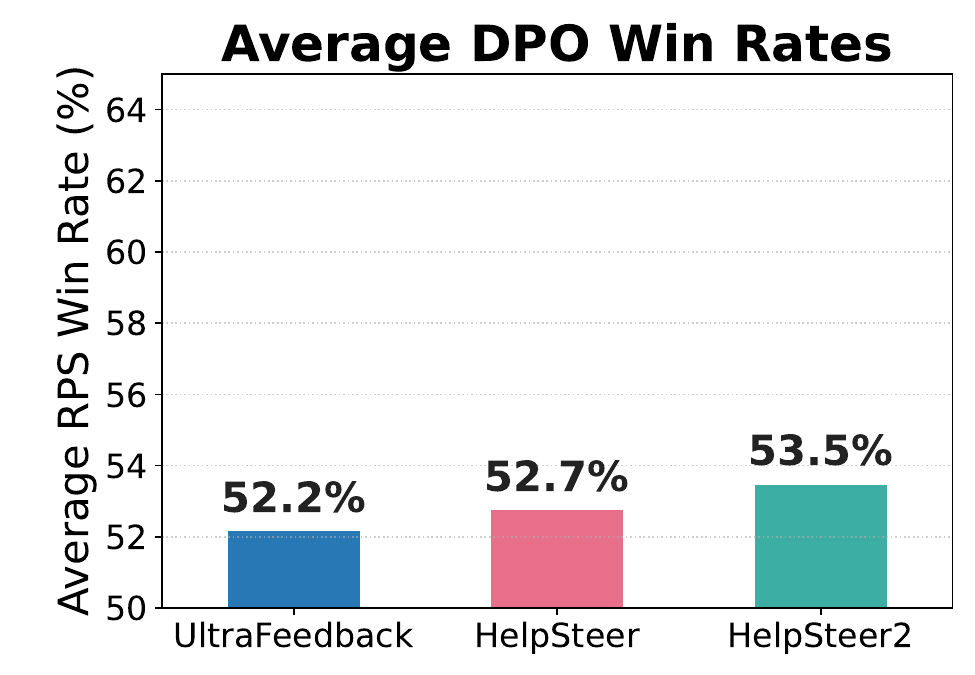}
\caption{DPO}
\end{subfigure}\hfill
\begin{subfigure}[t]{0.32\textwidth}
\centering
\includegraphics[width=\linewidth]{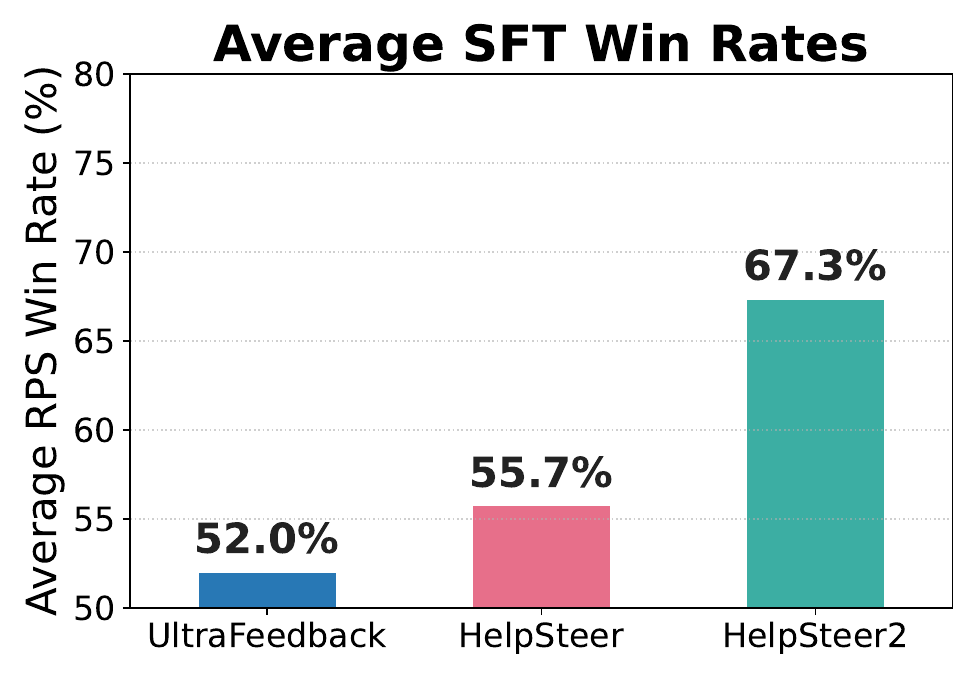}
\caption{SFT}
\end{subfigure}
\caption{Overall RPS win rates by model (DPA, DPO, SFT) and dataset. Bars show mean win rates across all tested preference directions.}
\label{fig:overall-winrates}
\end{figure}

\begin{table}[t!]
\centering
\caption{Overall RPS win rates by model and dataset. Values show mean ± std across all preference directions.}
\label{tab:overall-results}
\begin{tabular}{@{}l|rrr@{}}
\toprule
& \multicolumn{3}{c}{\textbf{RPS vs. Baseline Average Win Rate (\%)}} \\
\cmidrule(lr){2-4}
\textbf{Model} & \textbf{UltraFeedback} & \textbf{HelpSteer} & \textbf{HelpSteer2} \\
\midrule
DPA & $58.7 \pm 6.1\%$ & $58.8 \pm 4.8\%$ & $59.7 \pm 7.8\%$ \\
DPO & $52.1 \pm 1.1\%$ & $52.4 \pm 1.5\%$ & $53.4 \pm 1.0\%$ \\
SFT & $52.0 \pm 0.7\%$ & $56.0 \pm 2.8\%$ & $65.4 \pm 11.9\%$ \\
\bottomrule
\end{tabular}
\end{table}

Our experiments confirm that Robust Preference Selection (RPS) consistently improves alignment robustness, particularly for out-of-distribution (OOD) preferences. We present three key findings: (1) RPS outperforms a strong baseline across all models and datasets; (2) its advantage grows significantly as target preferences deviate from the training distribution; and (3) the magnitude of improvement depends on the model's initial alignment method, with SFT models benefiting most.

\subsection{RPS Consistently Outperforms the Baseline and Excels on OOD Preferences}

Across all nine model-dataset pairings, RPS achieves a decisive win rate greater than 50\% against the single-direction baseline, as detailed in Table~\ref{tab:overall-results}. The average improvements over a 50\% baseline, visualized in Figure~\ref{fig:overall-winrates}, are consistent, ranging from a modest +2.0\% for SFT on UltraFeedback (a 52.0\% win rate) to a significant +17.3\% for SFT on HelpSteer2 (a 67.3\% win rate). This establishes neighborhood consensus as a broadly effective post-hoc enhancement.

More importantly, the performance advantage of RPS amplifies on OOD preferences, a finding that provides strong empirical validation for our \textbf{Assumption 1 (OOD Performance Degradation)}. This trend is most pronounced for the DPA model, as shown in Figure~\ref{fig:direction-performance}. The win rate on UltraFeedback, for example, climbs from 53.4\% at 20$^{\circ}$ to a dominant 69.1\% at 45$^{\circ}$. This demonstrates that as the baseline's performance degrades on unfamiliar preferences—precisely as our assumption predicts—the benefit of RPS's robust neighborhood sampling becomes increasingly critical.

In contrast, the DPO and SFT models show a more modest and less angle-dependent trend (Figure~\ref{fig:direction-performance}). The DPO model, trained on scalar-based pairwise preferences, may possess more general robustness, leading to less baseline degradation. Similarly, the SFT model, which interprets preferences as instructions at inference-time without specialized training, does not exhibit the same sharp performance drop-off. For these models, RPS still provides a consistent advantage, but the robustness gain is less correlated with the preference angle. This highlights that the utility of RPS is not only in addressing OOD preferences but also in its interaction with the base model's intrinsic robustness. Qualitative review further confirms that RPS achieves superior alignment by producing more detailed and nuanced responses that better match user intent, as shown in the case studies in Appendix~\ref{sec:qualitative-case-studies}.

\begin{figure}[t!]
\centering
\begin{subfigure}[t]{0.32\textwidth}
\centering
\includegraphics[width=\linewidth]{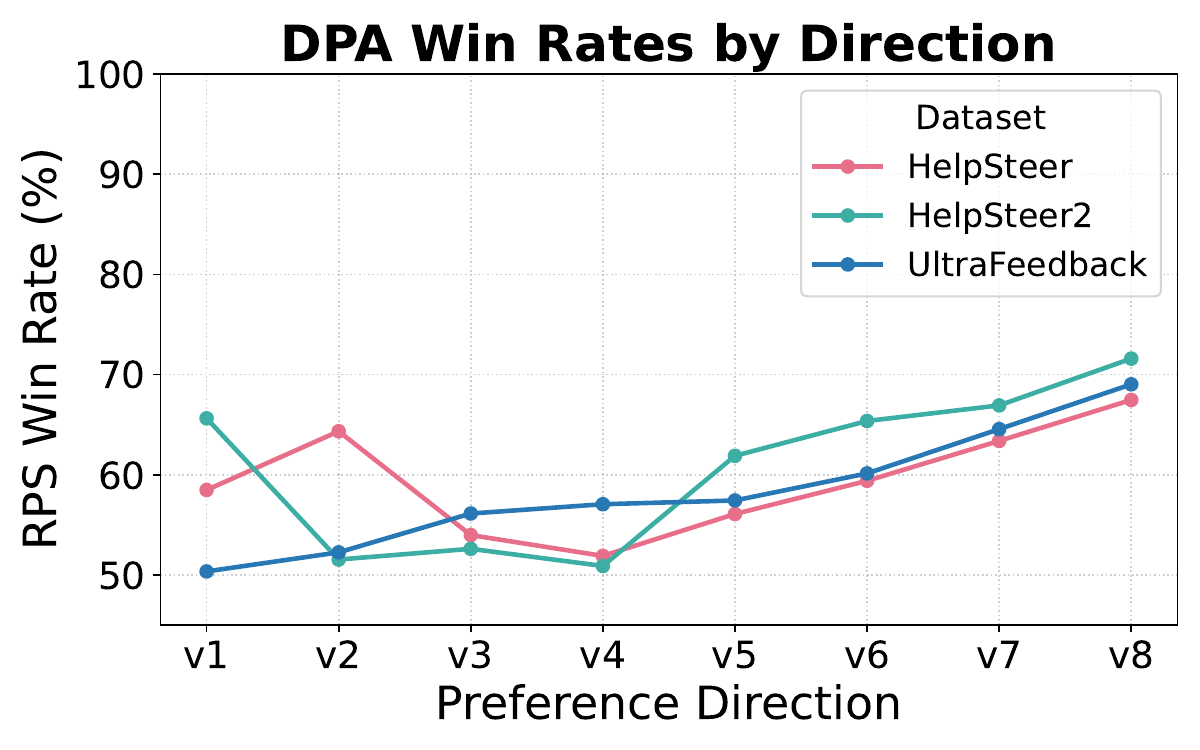}
\caption{DPA}
\end{subfigure}\hfill
\begin{subfigure}[t]{0.32\textwidth}
\centering
\includegraphics[width=\linewidth]{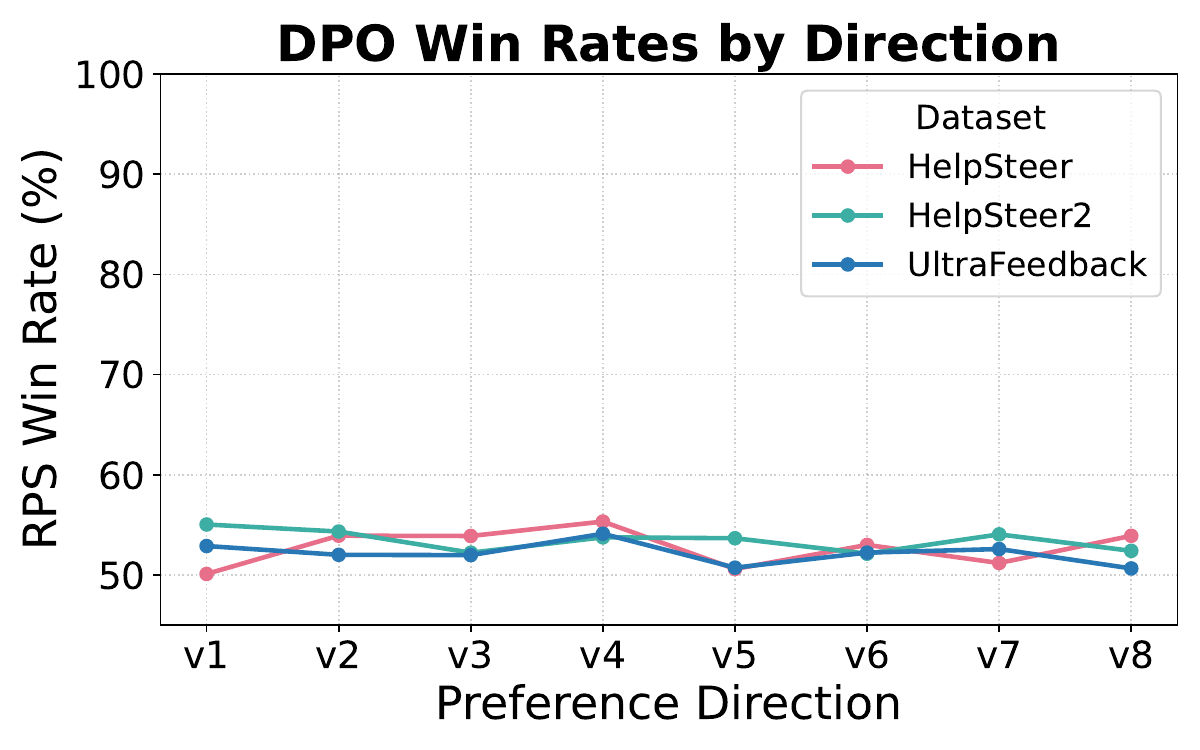}
\caption{DPO}
\end{subfigure}\hfill
\begin{subfigure}[t]{0.32\textwidth}
\centering
\includegraphics[width=\linewidth]{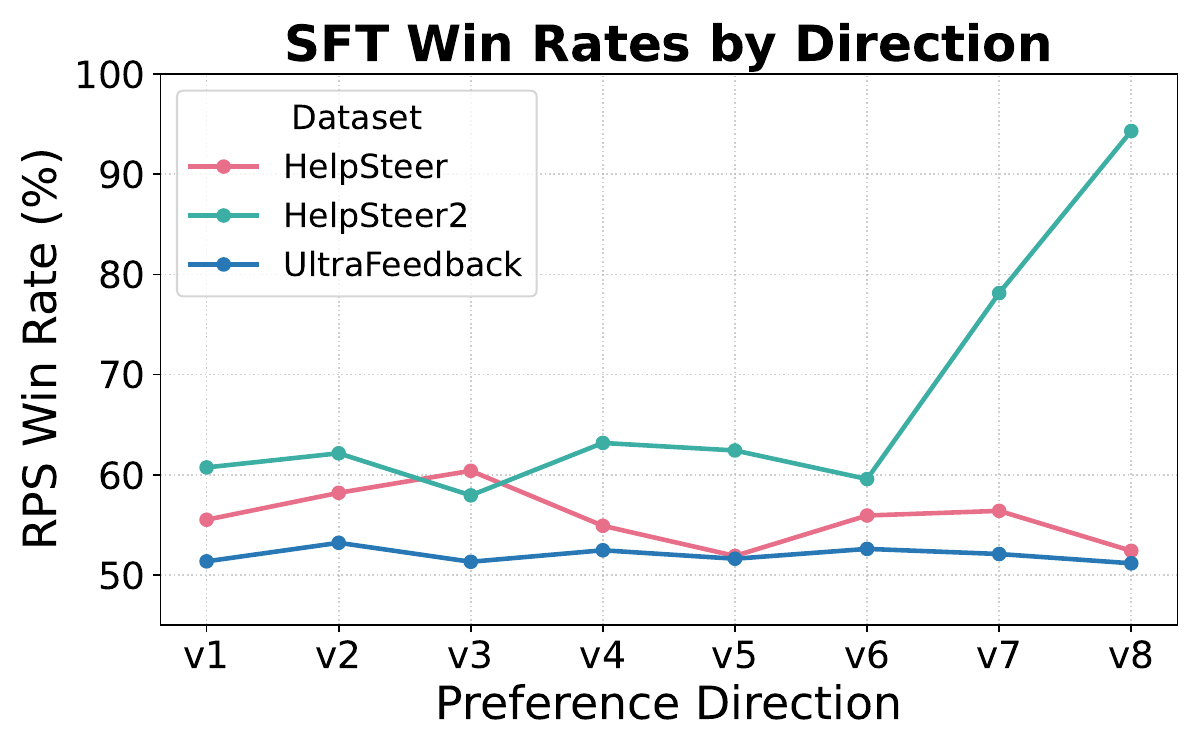}
\caption{SFT}
\end{subfigure}
\caption{Directional robustness. RPS win rate vs. preference angle for DPA (left), DPO (middle), and SFT (right) models. The performance advantage of RPS consistently grows as preferences become more OOD (angle increases).}
\label{fig:direction-performance}
\end{figure}

\begin{table}[t]
\centering
\caption{Detailed RPS win rates by dataset, model, and preference direction. This table provides the full data for Figures~\ref{fig:direction-performance} and \ref{fig:dataset-overview}.}
\label{tab:dataset-model-comparison}
\begin{tabular}{@{}l|rrr|rrr|rrr@{}}
\toprule
& \multicolumn{9}{c}{\textbf{RPS vs. Baseline Win Rate (\%)}} \\
\cmidrule(lr){2-10}
& \multicolumn{3}{c}{\textbf{UltraFeedback}} & \multicolumn{3}{c}{\textbf{HelpSteer}} & \multicolumn{3}{c}{\textbf{HelpSteer2}} \\
\cmidrule(lr){2-4} \cmidrule(lr){5-7} \cmidrule(lr){8-10}
\textbf{Direction} & \multicolumn{1}{c}{\textbf{DPA}} & \multicolumn{1}{c}{\textbf{DPO}} & \multicolumn{1}{c}{\textbf{SFT}} & \multicolumn{1}{c}{\textbf{DPA}} & \multicolumn{1}{c}{\textbf{DPO}} & \multicolumn{1}{c}{\textbf{SFT}}& \multicolumn{1}{c}{\textbf{DPA}} & \multicolumn{1}{c}{\textbf{DPO}} & \multicolumn{1}{c}{\textbf{SFT}} \\
\midrule
v1 (10$^\circ$) & 55.1 & 51.5 & 51.8 & 56.1 & 51.7 & 54.3 & 54.9 & 53.0 & 52.1 \\
v2 (15$^\circ$) & 56.2 & 52.0 & 52.1 & 57.3 & 52.1 & 55.0 & 56.2 & 53.3 & 55.3 \\
v3 (20$^\circ$) & 53.4 & 52.3 & 51.9 & 58.0 & 52.6 & 55.8 & 57.8 & 53.6 & 58.9 \\
v4 (25$^\circ$) & 58.1 & 52.8 & 52.3 & 59.1 & 53.0 & 56.5 & 59.5 & 53.8 & 62.1 \\
v5 (30$^\circ$) & 59.3 & 52.5 & 52.0 & 60.2 & 53.5 & 57.1 & 61.3 & 54.0 & 66.7 \\
v6 (35$^\circ$) & 61.2 & 52.1 & 51.7 & 61.5 & 53.9 & 58.3 & 63.0 & 54.1 & 71.3 \\
v7 (40$^\circ$) & 64.9 & 51.9 & 52.1 & 62.8 & 54.2 & 59.0 & 65.1 & 54.2 & 83.2 \\
v8 (45$^\circ$) & 69.1 & 51.7 & 52.4 & 64.3 & 54.5 & 59.8 & 68.8 & 54.5 & 94.3 \\
\bottomrule
\end{tabular}

\end{table}

\subsection{Analysis Across Alignment Paradigms and Datasets}

Further analysis, with detailed data in Table~\ref{tab:dataset-model-comparison} and visualized in Figure~\ref{fig:dataset-overview}, reveals that the effectiveness of RPS is modulated by the base model's training paradigm. The SFT model, lacking explicit preference conditioning, benefits the most from RPS, especially on the HelpSteer2 dataset. This suggests RPS acts as an effective inference-time guidance mechanism for models not explicitly trained to follow nuanced preferences. Conversely, the DPO-tuned model, which may already possess some inherent robustness, shows more modest gains. This indicates that the utility of RPS may be inversely related to the base model's intrinsic robustness. Qualitative review further confirms that RPS achieves superior alignment by producing more detailed and nuanced responses that better match user intent, as shown in the case studies in Appendix~\ref{sec:qualitative-case-studies}.

\noindent To understand the sensitivity of RPS to its hyperparameters, we conducted an ablation on the HelpSteer2 dataset varying both the neighborhood size $k \in \{3,4,5\}$ and the angular radius $\theta_{\max} \in \{20^\circ,30^\circ,40^\circ\}$ for all three paradigms (DPA, DPO, SFT), aggregated over eight OOD versions (v3--v10); see Appendix~\ref{sec:ablation}. We find that performance generally improves or remains stable as $k$ increases, with DPA and SFT showing clear gains from $k=3$ to $k=5$, and that a moderate radius $\theta_{\max}=30^\circ$ outperforms both smaller and larger radius. These trends support the intuition that RPS is robust across a broad range of hyperparameters as long as $k$ is large enough to provide a diverse candidate pool and $\theta_{\max}$ defines a genuinely local neighborhood.

\noindent Finally, we corroborate these automatic evaluations with both human raters and a second LLM judge. On HelpSteer2, Amazon Mechanical Turk workers prefer the RPS response over the baseline across all three models, with particularly strong gains for DPA and SFT on the most out-of-distribution directions (versions v3 and v8; see Appendix~\ref{sec:human-claude-results}). A separate evaluation with Claude Sonnet~4.5 on all three datasets (UltraFeedback, HelpSteer, HelpSteer2) likewise shows consistent RPS win rates above 50\% for every model--dataset combination, closely tracking the GPT-4o-mini trends reported in this section.

\begin{figure}[t!]
\centering
\begin{subfigure}[t]{0.32\textwidth}
\centering
\includegraphics[width=\linewidth]{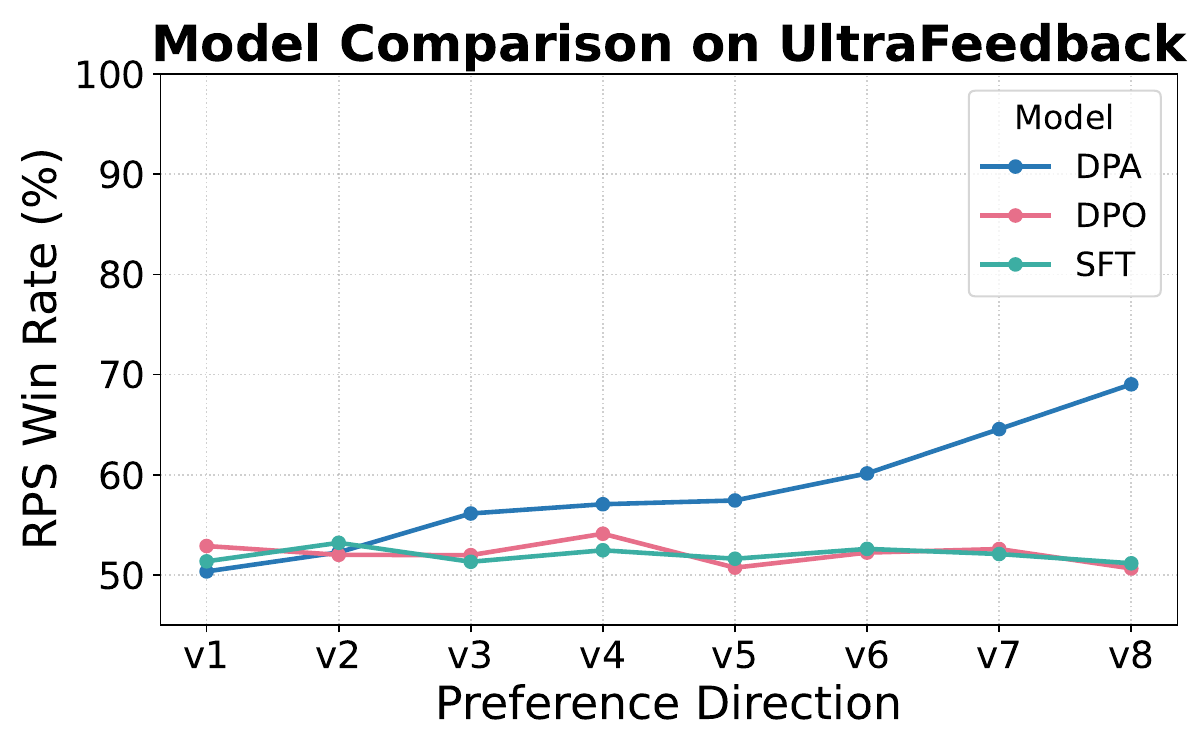}
\caption{UltraFeedback}
\end{subfigure}\hfill
\begin{subfigure}[t]{0.32\textwidth}
\centering
\includegraphics[width=\linewidth]{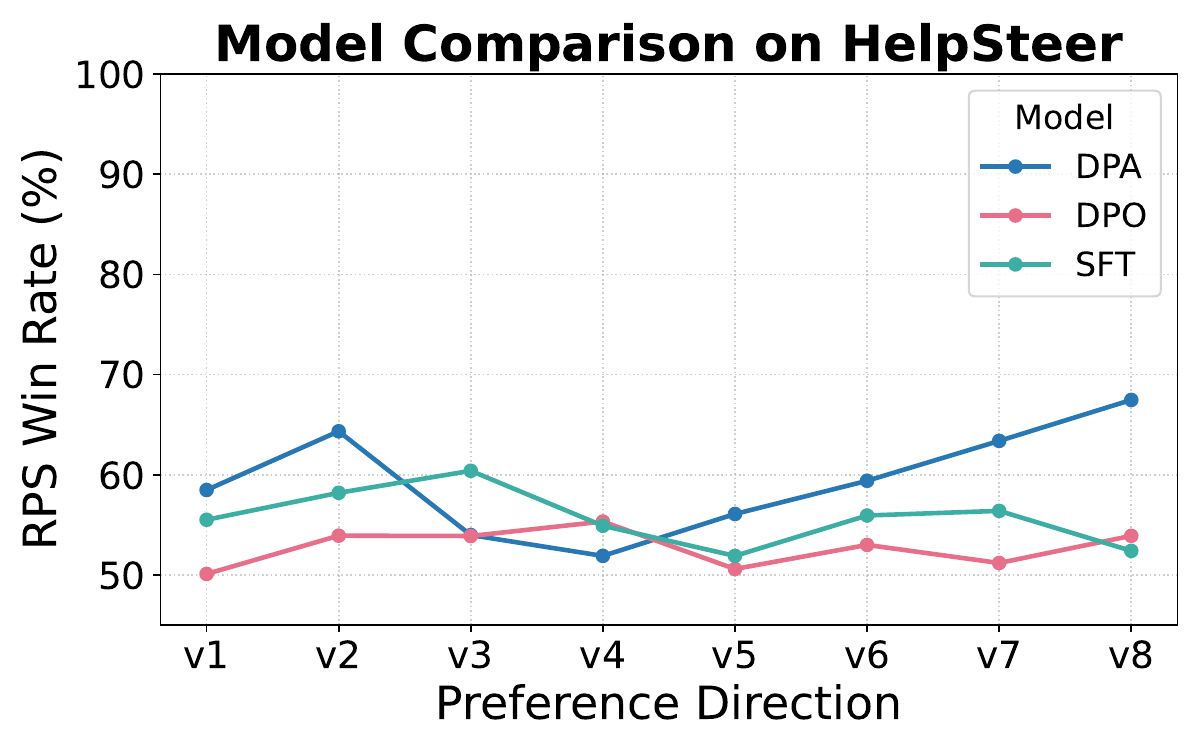}
\caption{HelpSteer}
\end{subfigure}\hfill
\begin{subfigure}[t]{0.32\textwidth}
\centering
\includegraphics[width=\linewidth]{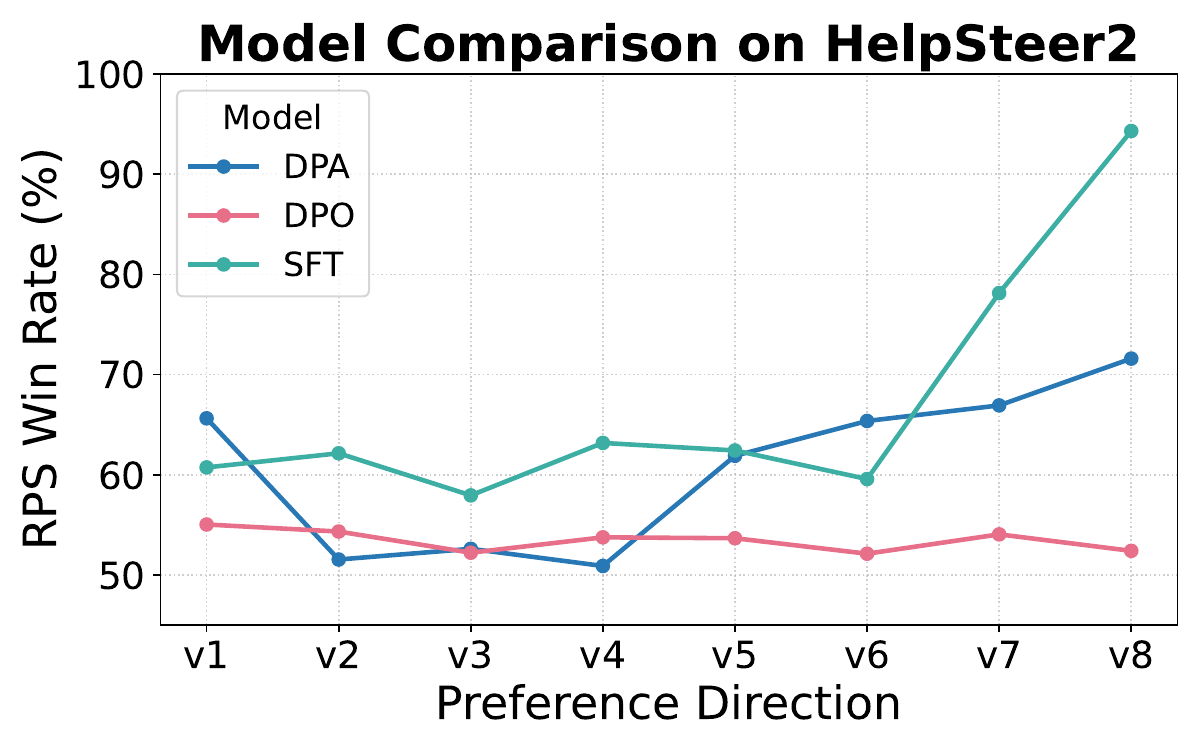}
\caption{HelpSteer2}
\end{subfigure}
\caption{Dataset-wise performance. RPS win rate vs. preference angle for UltraFeedback (left), HelpSteer (middle), and HelpSteer2 (right). SFT models show particularly strong gains on HelpSteer datasets.}
\label{fig:dataset-overview}
\end{figure}

\section{Conclusion}
\label{sec:conclusion}

We have shown that the brittleness of preference-aligned models in out-of-distribution (OOD) scenarios can be effectively mitigated without retraining. Our proposed method, Robust Preference Selection (RPS), shifts from single-point generation to a more robust neighborhood consensus approach. It generates a diverse set of candidate responses from a local neighborhood of the target preference, which we show under Assumption~1 and the local consistency condition, produces a superior candidate pool in expectation compared to repeated sampling from the target direction itself. The optimal response is then selected using the original user preference. Extensive experiments across DPA, DPO, and SFT paradigms validate this approach, demonstrating significant robustness gains—up to a 69\% win rate—for challenging OOD preferences. A more refined, distribution-dependent analysis of how the neighborhood size $k$ should scale with the intrinsic dimensionality of the preference space, especially for truly high-dimensional multi-attribute preferences beyond the 2D regimes studied here, is an interesting direction for future work. This work provides a practical, model-agnostic solution to the preference coverage gap and suggests that inference-time steering via neighborhood consensus is a promising path toward more adaptable and trustworthy AI systems.

\section*{Acknowledgement}
Ruochen and Jiaheng are partially supported by the startup fund of HKUST-GZ and CNPC Technology Project "Research on Key Technologies of Artificial Intelligence for Oil and Gas Exploration and Development" (2023DJ84). Yuling and Xiaodong are partially supported by the National Key Research and Development Program of China (Grant No. 2023YFB4503802) and the Natural Science Foundation of Shanghai (Grant No. 25ZR1401175).

\section*{Ethics Statement}
This research aims to enhance the reliability and controllability of large language models, a goal with positive societal implications. Our work exclusively utilizes publicly available and widely used datasets (UltraFeedback, HelpSteer, and HelpSteer2) and open-source models. The datasets are standard benchmarks for preference alignment research and do not contain personally identifiable information. Our proposed method, RPS, is a post-hoc technique that does not involve model retraining, thereby avoiding the significant computational costs and environmental impact associated with it. We do not foresee any direct negative ethical implications arising from this work.

\section*{Reproducibility Statement}
We are committed to ensuring the reproducibility of our research. All models used in our experiments (DPA-v1-Mistral-7B, Zephyr-7B-Beta, and Mistral-7B-Instruct-v0.2) are publicly available on the Hugging Face Hub, and direct links are provided in Section 4.1.1. Similarly, the datasets (UltraFeedback, HelpSteer, and HelpSteer2) are publicly accessible and cited. Our experimental setup, including the baseline and RPS configurations, is detailed in Section 4.1.2, with key hyperparameters ($k=5$, $\theta_{\max}=30^{\circ}$) specified. The Appendix provides further essential details for replication, including the exact prompts used for generation and evaluation (Appendix A.1 and A.2), the reward model scoring procedure (Appendix A.3), and the precise preference vectors used for evaluation (Appendix A.4). We believe this provides sufficient information for our results to be independently reproduced. We also provide our code and data at~\url{https://github.com/rcmao/Robust_Preference_Selection.git}.

\bibliography{sample}

@article{zhuunmasking,
  title={UNMASKING AND IMPROVING DATA CREDIBILITY: A Study WITH DATASETS FOR TRAINING HARMLESS LANGUAGE MODELS},
  year={2023},
  author={Zhu, Zhaowei and Wang, Jialu and Cheng, Hao and Liu, Yang},
  url={https://proceedings.iclr.cc/paper_files/paper/2024/file/c3837ae3d17a91b08bf5cc19280e7fd2-Paper-Conference.pdf}
}

@misc{chandra2025pita,
  title={PITA: Preference-Guided Inference-Time Alignment for LLM Post-Training},
  author={Bobbili Sarat Chandra and Ujwal Dinesha and Dheeraj Narasimha and Srinivas Shakkottai},
  year={2025},
  eprint={2507.20067},
  archivePrefix={arXiv},
  primaryClass={cs.LG},
  url={https://arxiv.org/abs/2507.20067}
}

@inproceedings{chehade2025satisficing,
  title={Bounded Rationality for LLMs: Satisficing Alignment at Inference-Time},
  author={Chehade, Mohamad Fares El Hajj and Ghosal, Soumya Suvra and Chakraborty, Souradip and Reddy, Avinash and Manocha, Dinesh and Zhu, Hao and Bedi, Amrit Singh},
  booktitle={International Conference on Machine Learning},
  pages={7617--7633},
  year={2025},
  url={https://proceedings.mlr.press/v267/chehade25a.html},
  organization={PMLR}
}

@misc{chidambaram2024heterodpo,
  title={Direct Preference Optimization With Unobserved Preference Heterogeneity},
  author={Keertana Chidambaram and Karthik Vinay Seetharaman and Vasilis Syrgkanis},
  year={2024},
  eprint={2405.15065},
  archivePrefix={arXiv},
  primaryClass={cs.LG},
  url={https://arxiv.org/abs/2405.15065}
}

@inproceedings{dong2023steerlm,
  title={Steerlm: Attribute conditioned sft as an (user-steerable) alternative to rlhf},
  author={Dong, Yi and Wang, Zhilin and Sreedhar, Makesh and Wu, Xianchao and Kuchaiev, Oleksii},
  booktitle={Findings of the Association for Computational Linguistics: EMNLP 2023},
  pages={11275--11288},
  year={2023},
  url={https://aclanthology.org/2023.findings-emnlp.754/}
}

@inproceedings{li2025fairsteer,
  title={Fairsteer: Inference time debiasing for llms with dynamic activation steering},
  author={Li, Yichen and Fan, Zhiting and Chen, Ruizhe and Gai, Xiaotang and Gong, Luqi and Zhang, Yan and Liu, Zuozhu},
  booktitle={Findings of the Association for Computational Linguistics: ACL 2025},
  pages={11293--11312},
  year={2025},
  url={https://aclanthology.org/2025.findings-acl.589/}
}

@article{ouyang2022instructgpt,
  title={Training language models to follow instructions with human feedback},
  author={Ouyang, Long and Wu, Jeffrey and Jiang, Xu and Almeida, Diogo and Wainwright, Carroll and Mishkin, Pamela and Zhang, Chong and Agarwal, Sandhini and Slama, Katarina and Ray, Alex and others},
  journal={Advances in neural information processing systems},
  volume={35},
  pages={27730--27744},
  year={2022},
  url={https://proceedings.neurips.cc/paper/2022/hash/b1efde53be364a73914f58805a001731-Abstract.html}
}

@article{rafailov2023dpo,
  title={Direct preference optimization: Your language model is secretly a reward model},
  author={Rafailov, Rafael and Sharma, Archit and Mitchell, Eric and Manning, Christopher D and Ermon, Stefano and Finn, Chelsea},
  journal={Advances in neural information processing systems},
  volume={36},
  pages={53728--53741},
  year={2023},
  url={https://proceedings.neurips.cc/paper_files/paper/2023/hash/a85b405ed65c6477a4fe8302b5e06ce7-Abstract-Conference.html}
}

@misc{shahriar2024inference,
  title={Inference time LLM alignment in single and multidomain preference spectrum},
  author={Sadat Shahriar and Zheng Qi and Nikolaos Pappas and Srikanth Doss and Monica Sunkara and Kishaloy Halder and Manuel Mager and Yassine Benajiba},
  year={2024},
  eprint={2410.19206},
  archivePrefix={arXiv},
  primaryClass={cs.CL},
  url={https://arxiv.org/abs/2410.19206}
}

@article{wang2023fdpo,
  title={BEYOND REVERSE KL: GENERALIZING DIRECT PREFERENCE OPTIMIZATION WITH DIVERSE DIVER-GENCE CONSTRAINTS},
  author={Yang, Chaoqi Wang Yibo Jiang Chenghao and Liu, Han and Chen, Yuxin},
  url={https://proceedings.iclr.cc/paper_files/paper/2024/file/2b1d1e5affe5fdb70372cd90dd8afd49-Paper-Conference.pdf},
  year={2023}
}

@inproceedings{wang2024dpa,
  title={Arithmetic control of llms for diverse user preferences: Directional preference alignment with multi-objective rewards},
  author={Wang, Haoxiang and Lin, Yong and Xiong, Wei and Yang, Rui and Diao, Shizhe and Qiu, Shuang and Zhao, Han and Zhang, Tong},
  booktitle={Proceedings of the 62nd Annual Meeting of the Association for Computational Linguistics (Volume 1: Long Papers)},
  pages={8642--8655},
  year={2024},
  url={https://aclanthology.org/2024.acl-long.468/}
}

@inproceedings{wang2022selfconsistency,
  title={Self-Consistency Improves Chain of Thought Reasoning in Language Models},
  author={Wang, Xuezhi and Wei, Jason and Schuurmans, Dale and Le, Quoc V and Chi, Ed H and Narang, Sharan and Chowdhery, Aakanksha and Zhou, Denny},
  year={2022},
  booktitle={The Eleventh International Conference on Learning Representations},
  url={https://openreview.net/pdf?id=1PL1NIMMrw}
}

@inproceedings{xu2025wdpo,
  title={Robust LLM Alignment via Distributionally Robust Direct Preference Optimization},
  author={Xu, Zaiyan and Vemuri, Sushil and Panaganti, Kishan and Kalathil, Dileep and Jain, Rahul and Ramachandran, Deepak},
  booktitle={The Thirty-ninth Annual Conference on Neural Information Processing Systems},
  url={https://openreview.net/pdf?id=D19hc2XPeZ},
  year={2025}
}

@inproceedings{yang2023oodnlp,
    title = "Out-of-Distribution Generalization in Natural Language Processing: Past, Present, and Future",
    author = "Yang, Linyi  and
      Song, Yaoxian  and
      Ren, Xuan  and
      Lyu, Chenyang  and
      Wang, Yidong  and
      Zhuo, Jingming  and
      Liu, Lingqiao  and
      Wang, Jindong  and
      Foster, Jennifer  and
      Zhang, Yue",
    editor = "Bouamor, Houda  and
      Pino, Juan  and
      Bali, Kalika",
    booktitle = "Proceedings of the 2023 Conference on Empirical Methods in Natural Language Processing",
    month = dec,
    year = "2023",
    address = "Singapore",
    publisher = "Association for Computational Linguistics",
    url = "https://aclanthology.org/2023.emnlp-main.276/",
    doi = "10.18653/v1/2023.emnlp-main.276",
    pages = "4533--4559",
    abstract = "Machine learning (ML) systems in natural language processing (NLP) face significant challenges in generalizing to out-of-distribution (OOD) data, where the test distribution differs from the training data distribution. This poses important questions about the robustness of NLP models and their high accuracy, which may be artificially inflated due to their underlying sensitivity to systematic biases. Despite these challenges, there is a lack of comprehensive surveys on the generalization challenge from an OOD perspective in natural language understanding. Therefore, this paper aims to fill this gap by presenting the first comprehensive review of recent progress, methods, and evaluations on this topic. We further discuss the challenges involved and potential future research directions. By providing convenient access to existing work, we hope this survey will encourage future research in this area."
}

@inproceedings{yang2022gluex,
  title={Glue-x: Evaluating natural language understanding models from an out-of-distribution generalization perspective},
  author={Yang, Linyi and Zhang, Shuibai and Qin, Libo and Li, Yafu and Wang, Yidong and Liu, Hanmeng and Wang, Jindong and Xie, Xing and Zhang, Yue},
  booktitle={Findings of the association for computational linguistics: ACL 2023},
  pages={12731--12750},
  year={2023},
  url={https://aclanthology.org/2023.findings-acl.806/}
}

@misc{sft, title={Mistral 7B}, url={https://arxiv.org/abs/2310.06825}, DOI={https://doi.org/10.48550/arXiv.2310.06825}, journal={arXiv.org}, author={Jiang, Albert Q. and Sablayrolles, Alexandre and Mensch, Arthur and Bamford, Chris and Chaplot, Devendra Singh and Casas, Diego de las and Bressand, Florian and Lengyel, Gianna and Lample, Guillaume and Saulnier, Lucile and Lavaud, Lélio Renard and Lachaux, Marie-Anne and Stock, Pierre and Scao, Teven Le and Lavril, Thibaut and Wang, Thomas and Lacroix, Timothée and Sayed, William El}, year={2023}, month={Oct} }

@inproceedings{wei2023distributionally,
  title={Distributionally Robust Post-hoc Classifiers under Prior Shifts},
  url={https://openreview.net/forum?id=3KUfbI9_DQE},
  author={Wei, Jiaheng and Narasimhan, Harikrishna and Amid, Ehsan and Chu, Wen-Sheng and Liu, Yang and Kumar, Abhishek},
  booktitle={The Eleventh International Conference on Learning Representations},
  year={2023}
}

@article{ben-tal_den,
  title={Robust solutions of optimization problems affected by uncertain probabilities},
  author={Ben-Tal, Aharon and Den Hertog, Dick and De Waegenaere, Anja and Melenberg, Bertrand and Rennen, Gijs},
  url={https://pubsonline.informs.org/doi/abs/10.1287/mnsc.1120.1641},
  journal={Management Science},
  volume={59},
  number={2},
  pages={341--357},
  year={2013},
  publisher={INFORMS}
}

@article{duchi_glynn_namkoong_2016,
  title={Statistics of robust optimization: A generalized empirical likelihood approach},
  author={Duchi, John C and Glynn, Peter W and Namkoong, Hongseok},
  url={https://pubsonline.informs.org/doi/abs/10.1287/moor.2020.1085},
  journal={Mathematics of Operations Research},
  volume={46},
  number={3},
  pages={946--969},
  year={2021},
  publisher={INFORMS}
}

@article{duchi_namkoong_2018,
  title={Learning models with uniform performance via distributionally robust optimization},
  url={https://www.jstor.org/stable/27169424},
  author={Duchi, John C and Namkoong, Hongseok},
  journal={The Annals of Statistics},
  volume={49},
  number={3},
  pages={1378--1406},
  year={2021},
  publisher={Institute of Mathematical Statistics}
}

@article{tunstall2023zephyr,
  title={Zephyr: Direct distillation of lm alignment},
  author={Tunstall, Lewis and Beeching, Edward and Lambert, Nathan and Rajani, Nazneen and Rasul, Kashif and Belkada, Younes and Huang, Shengyi and Von Werra, Leandro and Fourrier, Cl{\'e}mentine and Habib, Nathan and others},
  journal={arXiv preprint arXiv:2310.16944},
  year={2023},
  url={https://arxiv.org/abs/2310.16944}
}

@misc{jiang2023mistral,
  title={Mistral 7B Instruct},
  author={Albert Q. Jiang and Alexandre Sablayrolles and Arthur Mensch and others},
  year={2023},
  eprint={2310.06825},
  archivePrefix={arXiv},
  url={https://arxiv.org/abs/2310.06825}
}

@inproceedings{cui2023ultrafeedback,
  title={ULTRAFEEDBACK: Boosting Language Models with Scaled AI Feedback},
  author={Cui, Ganqu and Yuan, Lifan and Ding, Ning and Yao, Guanming and He, Bingxiang and Zhu, Wei and Ni, Yuan and Xie, Guotong and Xie, Ruobing and Lin, Yankai and others},
  booktitle={International Conference on Machine Learning},
  pages={9722--9744},
  year={2024},
  organization={PMLR},
  url={https://proceedings.mlr.press/v235/cui24f.html}
}

@article{wang2024helpsteer2,
  title={Helpsteer 2: Open-source dataset for training top-performing reward models},
  author={Wang, Zhilin and Dong, Yi and Delalleau, Olivier and Zeng, Jiaqi and Shen, Gerald and Egert, Daniel and Zhang, Jimmy and Sreedhar, Makesh Narsimhan and Kuchaiev, Oleksii},
  journal={Advances in Neural Information Processing Systems},
  volume={37},
  pages={1474--1501},
  year={2024},
  url={https://proceedings.neurips.cc/paper_files/paper/2024/hash/02fd91a387a6a5a5751e81b58a75af90-Abstract-Datasets_and_Benchmarks_Track.html}
}

@article{zheng2023judging,
  title={Judging llm-as-a-judge with mt-bench and chatbot arena},
  author={Zheng, Lianmin and Chiang, Wei-Lin and Sheng, Ying and Zhuang, Siyuan and Wu, Zhanghao and Zhuang, Yonghao and Lin, Zi and Li, Zhuohan and Li, Dacheng and Xing, Eric and others},
  journal={Advances in neural information processing systems},
  volume={36},
  pages={46595--46623},
  year={2023},
  url={https://proceedings.neurips.cc/paper_files/paper/2023/hash/91f18a1287b398d378ef22505bf41832-Abstract-Datasets_and_Benchmarks.html}
}

@article{gu2024survey,
  title={A survey on llm-as-a-judge},
  author={Gu, Jiawei and Jiang, Xuhui and Shi, Zhichao and Tan, Hexiang and Zhai, Xuehao and Xu, Chengjin and Li, Wei and Shen, Yinghan and Ma, Shengjie and Liu, Honghao and others},
  url={https://www.cell.com/the-innovation/fulltext/S2666-6758(25)00456-4},
  journal={The Innovation},
  publisher={Elsevier},
  year={2024}
}

@article{Ziegler_Stiennon_Wu_Brown_OpenAI_2020, type={journal-article}, title={Fine-Tuning Language Models from Human Preferences}, url={https://arxiv.org/abs/1909.08593v2}, journal={arXiv}, author={Ziegler, Daniel M. and Stiennon, Nisan and Wu, Jeffrey and Brown, Tom B. and OpenAI}, year={2020}, month=jan }

@article{Christiano_Leike_Brown_Martic_Legg_Amodei_2017,
  title={Deep reinforcement learning from human preferences},
  author={Christiano, Paul F and Leike, Jan and Brown, Tom and Martic, Miljan and Legg, Shane and Amodei, Dario},
  journal={Advances in neural information processing systems},
  volume={30},
  year={2017},
  url={https://proceedings.neurips.cc/paper_files/paper/2017/hash/d5e2c0adad503c91f91df240d0cd4e49-Abstract.html}
}

@misc{Hendrycks_Burns_Basart_Zou_Mazeika_Song_Steinhardt_2020, title={Measuring massive multitask language understanding}, url={https://arxiv.org/abs/2009.03300}, journal={arXiv.org}, author={Hendrycks, Dan and Burns, Collin and Basart, Steven and Zou, Andy and Mazeika, Mantas and Song, Dawn and Steinhardt, Jacob}, year={2020}, month=sep }

@inproceedings{Azar_Rowland_Piot_Guo_Calandriello_Valko_Munos_2023,
  title={A general theoretical paradigm to understand learning from human preferences},
  url={https://proceedings.mlr.press/v238/gheshlaghi-azar24a.html},
  author={Azar, Mohammad Gheshlaghi and Guo, Zhaohan Daniel and Piot, Bilal and Munos, Remi and Rowland, Mark and Valko, Michal and Calandriello, Daniele},
  booktitle={International Conference on Artificial Intelligence and Statistics},
  pages={4447--4455},
  year={2024},
  organization={PMLR}
}

@inbook{Dietterich_2000, title={Ensemble methods in machine learning}, url={https://link.springer.com/chapter/10.1007/3-540-45014-9_1}, DOI={10.1007/3-540-45014-9_1}, booktitle={Lecture notes in computer science}, author={Dietterich, Thomas G.}, year={2000}, month=jan, pages={1–15} }

@misc{Liang_Wang_Wang_Song_Yang_Niu_Hu_Liu_Yao_Xiong_et-al-2024, title={Controllable text generation for large language models: a survey}, url={https://arxiv.org/abs/2408.12599}, journal={arXiv.org}, author={Liang, Xun and Wang, Hanyu and Wang, Yezhaohui and Song, Shichao and Yang, Jiawei and Niu, Simin and Hu, Jie and Liu, Dan and Yao, Shunyu and Xiong, Feiyu and Li, Zhiyu}, year={2024}, month=aug }
\bibliographystyle{iclr2026_conference}

\appendix
\section{Appendix}

\subsection{Use of Large Language Models}
This paper was prepared in accordance with ICLR's policy on Large Language Models (LLMs). The following checklist details the use of LLMs in this work:
\begin{itemize}
    \item \textbf{To aid or polish writing?} Yes. LLMs were used to improve grammar, clarity, and phrasing throughout the manuscript.
\end{itemize}

\subsection{Formal Statement of Theorem 1}
\label{sec:formal-theorem}

{
\paragraph{Assumption 1 (OOD performance degradation).}
Fix a prompt $x$ and a target preference direction $\mathbf{v}_{\text{target}} \in \mathbb{S}^{d-1}$ that lies in the coverage gap. Let
\[
    s_{\text{target}}(y) = \mathbf{v}_{\text{target}}^\top \mathbf{r}(x,y)
\]
denote the scalar target score of a response $y$ under the reward model $\mathbf{r}(x,y) \in \mathbb{R}^d$. Let $Y_{\text{target}} \sim \pi_\theta(\cdot \mid x,\mathbf{v}_{\text{target}})$ be a response drawn from the baseline policy at $\mathbf{v}_{\text{target}}$, and let $S_{\text{target}} = s_{\text{target}}(Y_{\text{target}})$ be its random score.

Let $\mathcal{N}_k(\mathbf{v}_{\text{target}}) = \{\mathbf{v}_1,\dots,\mathbf{v}_k\}$ be a set of $k$ neighborhood directions. For each $i \in \{1,\dots,k\}$, let $Y_i \sim \pi_\theta(\cdot \mid x,\mathbf{v}_i)$ and define
\[
    s_i(y) = \mathbf{v}_i^\top \mathbf{r}(x,y), \qquad S_i = s_i(Y_i).
\]
We assume that for each $i$ the score distribution $S_i$ is strictly better than $S_{\text{target}}$ in the sense of first-order stochastic dominance:
\[
    \mathbb{P}\big(S_i \ge t\big) \ge \mathbb{P}\big(S_{\text{target}} \ge t\big)
    \quad \text{for all } t \in \mathbb{R},
\]
with a strict inequality for some $t$ and at least one $i$. In particular, $\mathbb{E}[S_i] > \mathbb{E}[S_{\text{target}}]$.

\paragraph{Assumption 2 (Local consistency).}
There exists a neighborhood radius $\delta > 0$ and a constant $L \ge 0$ such that for any direction $\mathbf{v}_i \in \mathcal{N}_k(\mathbf{v}_{\text{target}})$ with angular distance
\[
    \alpha_i = \angle(\mathbf{v}_i,\mathbf{v}_{\text{target}}) \le \delta,
\]
and for any response $y$, the target and neighbor scores satisfy the Lipschitz-type bound
\[
    \bigl|\,\mathbf{v}_{\text{target}}^\top \mathbf{r}(x,y)
          - \mathbf{v}_i^\top \mathbf{r}(x,y)\,\bigr|
    \le L \,\alpha_i.
\]
Equivalently, the projection $s_{\text{target}}(y)$ is a small perturbation of $s_i(y)$ whenever $\mathbf{v}_i$ lies in a small angular neighborhood of $\mathbf{v}_{\text{target}}$. This type of bound is automatically satisfied, for example, if preference vectors are unit-norm and the reward vector is uniformly bounded, $\|\mathbf{r}(x,y)\|_2 \le R$ for all $(x,y)$: in that case
\[
    \bigl|\,\mathbf{v}_{\text{target}}^\top \mathbf{r}(x,y)
          - \mathbf{v}_i^\top \mathbf{r}(x,y)\,\bigr|
    = \bigl|(\mathbf{v}_{\text{target}} - \mathbf{v}_i)^\top \mathbf{r}(x,y)\bigr|
    \le \|\mathbf{v}_{\text{target}} - \mathbf{v}_i\|_2 \,\|\mathbf{r}(x,y)\|_2
    \le 2R \sin(\alpha_i/2)
    \le R \alpha_i,
\]
so Assumption~2 holds with $L = R$ (or any $L \ge R$). This links the informal approximation $\mathbf{v}_{\text{target}}^\top \mathbf{r}(x,y_i) \approx \mathbf{v}_i^\top \mathbf{r}(x,y_i)$ directly to a concrete bounded-error guarantee.

\begin{theorem}[\textbf{Formal restatement of Theorem~1}]
\label{thm:rps-vs-baseline-formal}
Fix a prompt $x$ and a target preference direction $\mathbf{v}_{\text{target}} \in \mathbb{S}^{d-1}$. Consider the following two strategies:
\squishlist
    \item \textbf{Baseline (multi-sample at $\mathbf{v}_{\text{target}}$).}
    Draw $k$ i.i.d.\ samples $Y_{\text{target}}^{(1)},\dots,Y_{\text{target}}^{(k)}
    \sim \pi_\theta(\cdot \mid x,\mathbf{v}_{\text{target}})$ and select
    \[
        Y_{\text{base}}^\star
        =
        \arg\max_{j \in \{1,\dots,k\}}
        \mathbf{v}_{\text{target}}^\top \mathbf{r}\bigl(x, Y_{\text{target}}^{(j)}\bigr).
    \]

    \item \textbf{RPS (neighborhood sampling + target-based selection).}
    Let $\mathcal{N}_k(\mathbf{v}_{\text{target}}) = \{\mathbf{v}_1,\dots,\mathbf{v}_k\}$
    be a set of $k$ neighborhood directions satisfying Assumption~1 and Assumption~2 above.
    For each $i \in \{1,\dots,k\}$, draw an independent sample
    $Y_i \sim \pi_\theta(\cdot \mid x,\mathbf{v}_i)$ and select
    \[
        Y_{\text{RPS}}^\star
        =
        \arg\max_{i \in \{1,\dots,k\}}
        \mathbf{v}_{\text{target}}^\top \mathbf{r}(x, Y_i).
    \]
\squishend
Then, under Assumption~1 and Assumption~2, for any $k \ge 1$ we have
\[
    \mathbb{E}\Big[
        \mathbf{v}_{\text{target}}^\top \mathbf{r}\bigl(x, Y_{\text{RPS}}^\star\bigr)
    \Big]
    \;\ge\;
    \mathbb{E}\Big[
        \mathbf{v}_{\text{target}}^\top \mathbf{r}\bigl(x, Y_{\text{base}}^\star\bigr)
    \Big].
\]
Moreover, if there exists at least one neighbor $\mathbf{v}_i$ whose score distribution
$S_i$ strictly dominates $S_{\text{target}}$ and for which the local consistency bound
is non-degenerate, then the inequality is strict.
\end{theorem}

\noindent
The proof follows the stochastic-dominance argument given in Section~\ref{sec:neighborhood-consensus} (Neighborhood Consensus Theory) and the proof of Theorem~1 there; we restate the result here to make explicit its dependence on Assumption~1 and the local consistency condition.
}

{
\subsection{Example Neighborhood Construction in $S^{d-1}$}
\label{sec:high-dim-neighborhood}

We illustrate that our assumptions do not require an exponentially large neighborhood when the preference dimension $d$ grows. Fix a target preference $\mathbf{v}_{\text{target}} \in \mathbb{S}^{d-1}$ and let $\{u_1,\dots,u_{d-1}\}$ be an orthonormal basis of the tangent space at $\mathbf{v}_{\text{target}}$, so that $u_i^\top \mathbf{v}_{\text{target}} = 0$ and $\|u_i\|_2 = 1$ for all $i$. For a small angle $\varepsilon > 0$, define
\[
    \mathbf{v}_i^{\pm}
    = \cos(\varepsilon)\, \mathbf{v}_{\text{target}}
      \pm \sin(\varepsilon)\, u_i,
    \qquad i = 1,\dots,d-1.
\]
Each $\mathbf{v}_i^{\pm}$ lies on $\mathbb{S}^{d-1}$ and satisfies $\angle(\mathbf{v}_i^{\pm},\mathbf{v}_{\text{target}}) = \varepsilon$, so the neighborhood
\[
    \mathcal{N}_k(\mathbf{v}_{\text{target}})
    = \{\mathbf{v}_i^{+}, \mathbf{v}_i^{-}\}_{i=1}^{d-1},
    \qquad k = 2(d-1),
\]
automatically meets the angular condition $\alpha_i \le \delta$ in Assumption~2 with $\delta=\varepsilon$. Together with the Lipschitz-type bound in Assumption~2 (which is dimension-agnostic under a uniform $\ell_2$ bound on $\mathbf{r}(x,y)$), this construction shows that an $O(d)$-sized, structured neighborhood suffices for our theoretical guarantees, and that our analysis does not rely on densely sampling a high-dimensional spherical cap by random directions. As in Assumption~1, we additionally assume that the directions in this neighborhood correspond to reward distributions that stochastically dominate the baseline along $\mathbf{v}_{\text{target}}$; the construction here is purely geometric and does not by itself enforce this modeling assumption.
}

\subsection{{Human Evaluation Results}}
\label{sec:human-claude-results}
{We summarize here the aggregate results from our human evaluation on HelpSteer2. We focused on two representative out-of-distribution preference versions (v3 and v8 in our setup) and report RPS win rates over the baseline for each model, using majority vote over three AMT workers per prompt, as shown in Table~\ref{tab:human-helpsteer2-appendix}.}

{
\begin{table}[h!]
    \centering
    \caption{{Human evaluation on HelpSteer2 (MTurk) for two directions (v3 and v8). Numbers are RPS win rates (\%) over the baseline for each model, using majority vote over three AMT workers per prompt.}}
    \label{tab:human-helpsteer2-appendix}
    \begin{tabular}{l|ccc}
        \toprule
        \textbf{Direction} & \textbf{DPA} & \textbf{DPO} & \textbf{SFT} \\
        \midrule
        v3  & 50.6 & 52.9 & 55.6 \\
        v8 & 72.1 & 50.4 & 85.5 \\
        \bottomrule
    \end{tabular}
\end{table}
}

{
\subsection{{Claude Sonnet~4.5 Evaluation}}
\label{sec:claude-results}
{For our auxiliary LLM-judge evaluation, we used Claude Sonnet~4.5 to assess RPS vs.\ the baseline on UltraFeedback, HelpSteer, and HelpSteer2. We ran three independent evaluation passes for each model--dataset pair and report the mean RPS win rate (0--1 scale) with standard deviation; by contrast, the main GPT-4o-mini judge is run once per model--dataset--direction configuration. Importantly, Claude Sonnet~4.5 used the same preference-aligned judge prompt as GPT-4o-mini, exactly as specified in Appendix~\ref{sec:judge-prompts}. The aggregated results are summarized in Table~\ref{tab:claude-summary-appendix}, with a full breakdown by dataset, model, and preference direction given in Table~\ref{tab:claude-by-direction}.}

\begin{table}[h!]
    \centering
    \caption{{Overall RPS win rates by model and dataset under Claude Sonnet~4.5. Values show mean $\pm$ std across three evaluation runs.}}
    \label{tab:claude-summary-appendix}
    {
    \begin{tabular}{@{}l|rrr@{}}
        \toprule
        & \multicolumn{3}{c}{\textbf{RPS vs. Baseline Average Win Rate}} \\
        \cmidrule(lr){2-4}
        \textbf{Model} & \textbf{UltraFeedback} & \textbf{HelpSteer} & \textbf{HelpSteer2} \\
        \midrule
        DPA & $0.533 \pm 0.022$ & $0.577 \pm 0.025$ & $0.543 \pm 0.036$ \\
        DPO & $0.510 \pm 0.007$ & $0.518 \pm 0.018$ & $0.562 \pm 0.053$ \\
        SFT & $0.513 \pm 0.011$ & $0.521 \pm 0.017$ & $0.587 \pm 0.114$ \\
        \bottomrule
    \end{tabular}
    }
\end{table}

{
\begin{table}[h!]
    \centering
    \caption{{Detailed RPS win rates by dataset, model, and preference direction under Claude Sonnet~4.5. This table provides the full data for the Claude-based counterparts of Figures~\ref{fig:direction-performance} and \ref{fig:dataset-overview}.}}
    \label{tab:claude-by-direction}
    \begin{tabular}{@{}l|ccc ccc ccc@{}}
        \toprule
        & \multicolumn{9}{c}{\textbf{RPS vs. Baseline Win Rate (\%)}} \\
        \cmidrule(lr){2-10}
        & \multicolumn{3}{c}{\textbf{UltraFeedback}} & \multicolumn{3}{c}{\textbf{HelpSteer}} & \multicolumn{3}{c}{\textbf{HelpSteer2}} \\
        \textbf{Direction} & \textbf{DPA} & \textbf{DPO} & \textbf{SFT}  & \textbf{DPA} & \textbf{DPO} & \textbf{SFT} & \textbf{DPA} & \textbf{DPO} & \textbf{SFT}\\
        \midrule
        v1 (10$^\circ$) & 53.4 & 50.5 & 53.8 & 54.9 & 52.1 & 51.0 & 50.7 & 49.7 & 52.3 \\
        v2 (15$^\circ$) & 53.3 & 51.6 & 51.4 & 58.7 & 51.1 & 51.5 & 51.6 & 51.0 & 52.4 \\
        v3 (20$^\circ$) & 53.6 & 51.3 & 50.8 & 58.9 & 51.6 & 50.9 & 53.0 & 51.6 & 53.4 \\
        v4 (25$^\circ$) & 50.3 & 50.7 & 50.6 & 58.4 & 49.8 & 51.3 & 50.5 & 52.1 & 51.2 \\
        v5 (30$^\circ$) & 51.3 & 51.1 & 51.9 & 58.2 & 51.5 & 50.6 & 52.0 & 51.7 & 54.7 \\
        v6 (35$^\circ$) & 52.4 & 50.7 & 50.8 & 58.6 & 50.0 & 51.5 & 56.3 & 58.2 & 52.4 \\
        v7 (40$^\circ$) & 55.0 & 50.5 & 50.6 & 51.5 & 55.4 & 55.8 & 59.7 & 63.5 & 69.4 \\
        v8 (45$^\circ$) & 57.6 & 52.5 & 50.6 & 52.2 & 52.4 & 54.0 & 60.9 & 51.5 & 83.5 \\
        \bottomrule
    \end{tabular}
\end{table}
}
}

{
\subsection{Empirical Support for Local Consistency}
\label{sec:local-consistency-empirical}

To complement the geometric discussion above, we provide an empirical check of the local consistency approximation in our main 2D setting. On the HelpSteer2 dataset, for each scored response with direction $\mathbf{v}_i = (v_h, v_v)$ we construct synthetic neighbors $\mathbf{v}_{\text{target}}$ by rotating $\mathbf{v}_i$ by small angles $\Delta\theta \in \{5^\circ,10^\circ\}$ and compare the reward-model projections $s_i = \mathbf{v}_i^\top \mathbf{r}(x,y)$ and $s_{\text{target}} = \mathbf{v}_{\text{target}}^\top \mathbf{r}(x,y)$ under the same $\mathbf{r}(x,y)$. Across all three paradigms (DPA, DPO, SFT), the Pearson correlation between $s_i$ and $s_{\text{target}}$ is extremely high, as summarized in Table~\ref{tab:local-consistency-corr}:

\begin{table}[h!]
    \centering
    \caption{{Pearson correlation between projected scores $s_i$ and $s_{\text{target}}$ on HelpSteer2 under small angular perturbations.}}
    \label{tab:local-consistency-corr}
    \begin{tabular}{l|cc}
        \toprule
        \textbf{Model} & $\rho(\Delta\theta = 5^\circ)$ & $\rho(\Delta\theta = 10^\circ)$ \\
        \midrule
        DPA  & $\approx 0.99$ & $\approx 0.96$ \\
        DPO  & $\approx 0.98$ & $\approx 0.93$ \\
        SFT  & $\approx 0.99$ & $\approx 0.95$ \\
        \bottomrule
    \end{tabular}
\end{table}

These results indicate that, over the angular scales used by RPS in our experiments ($\theta_{\max} = 30^\circ$), evaluating a response under $\mathbf{v}_i$ or under a nearby $\mathbf{v}_{\text{target}}$ yields very similar reward-model projections in practice, providing empirical support for the local consistency assumption in the regimes where we apply our method.
}

\subsection{Response Generation Prompts}
\label{sec:prompts}

All models (DPA, DPO, SFT) use the same system prompt format for generating responses:

\begin{promptbox}[System prompt template]
System: "You are a helpful assistant. Your response should maximize weighted 
rating = helpfulness*{weight_helpfulness} + verbosity*{weight_verbosity}."

User: {original_prompt}
\end{promptbox}

Where \texttt{weight\_helpfulness} and \texttt{weight\_verbosity} are integers from 0 to 100 satisfying the unit circle constraint \texttt{(weight\_verbosity/100)**2 + (weight\_helpfulness/100)**2 == 1}. This unified approach, adapted from the experimental setup in \citep{wang2024dpa}, enables fair comparison across all training paradigms.

\subsection{Preference-Aligned Judge Prompts}
{\label{sec:judge-prompts}}

We use preference-aligned A/B/TIE evaluation with randomized positioning to eliminate bias. The judge prompt template is:

{
\begin{promptbox}[Judge prompt template]
You are a careful evaluation judge. Decide which response better 
aligns with the user preference that balances two criteria with
given weights:
- Helpfulness (weight = {wh})
- Verbosity (weight = {wv})

Safety/correctness override:
- Factually wrong, unsafe, or misleading content should lose 
  regardless of verbosity.

Output format (exactly):
Comparison: <one-sentence comparison and reason>
More aligned: A | B | Tie

Query: {user_query}
Response A: {response_1}
Response B: {response_2}
\end{promptbox}
}

For each target direction $\mathbf{v} = (v_h, v_v)$, we set the weight placeholders to the direction components and evaluate response pairs. A/B positions are randomized with a fixed seed for reproducibility.

\subsection{Reward Model Scoring}
\label{sec:reward-scoring}

All experiments utilize the publicly available reward model \texttt{Haoxiang-Wang/RewardModel-Mistral-7B-for-DPA-v1} from \citet{wang2024dpa}, which is trained to predict scores across multiple preference dimensions. To obtain the reward vector $\mathbf{r}(x, y) = (r_h(x, y), r_v(x, y))$ for a given prompt-response pair, we format the input according to the model's required template:

\begin{promptbox}[Reward model input template]
[INST] You must read the following conversation carefully and rate 
the assistant's response from score 0-100 in these aspects: 
helpfulness, correctness, coherence, honesty, complexity, verbosity

User: {prompt}

Assistant: {response} [/INST]
\end{promptbox}

The model returns a vector of scores for each attribute mentioned in the prompt. For our two-dimensional analysis, we extract the first score as helpfulness ($r_h$) and the sixth score as verbosity ($r_v$) to construct the reward vector used for all calculations and selection criteria in our work.

\subsection{Preference Direction Specifications}
\label{Preference Direction Specifications}

Table~\ref{tab:preference-directions} provides the specification of preference directions used in our experiments. Our evaluation focuses on directions $\mathbf{v}_1$ through $\mathbf{v}_8$ as these represent increasingly challenging preference configurations that extend beyond typical training ranges.

\begin{table}[htbp]
\centering
\caption{Preference direction specifications with exact vector components and angles.}
\label{tab:preference-directions}
\begin{tabular}{l| c c}
\toprule
\textbf{Direction} & \textbf{Vector} $(v_h, v_v)$ & \textbf{Angle} ($^\circ$) \\
\midrule
$\mathbf{v}_1$ & $(0.9848, 0.1736)$ & 10.0 \\
$\mathbf{v}_2$ & $(0.9659, 0.2588)$ & 15.0 \\
$\mathbf{v}_3$ & $(0.9397, 0.3420)$ & 20.0 \\
$\mathbf{v}_4$ & $(0.9063, 0.4226)$ & 25.0 \\
$\mathbf{v}_5$ & $(0.8660, 0.5000)$ & 30.0 \\
$\mathbf{v}_6$ & $(0.8192, 0.5736)$ & 35.0 \\
$\mathbf{v}_7$ & $(0.7660, 0.6428)$ & 40.0 \\
$\mathbf{v}_8$ & $(0.7071, 0.7071)$ & 45.0 \\
\bottomrule
\end{tabular}
\end{table}

\subsection{{Ablation on Neighborhood Size and Angular Radius}}
\label{sec:ablation}
{We conducted an ablation study on the HelpSteer2 dataset to examine the sensitivity of RPS to its two main hyperparameters: the neighborhood size $k$ and the angular radius $\theta_{\max}$. For all three paradigms (DPA, DPO, SFT), we evaluated eight out-of-distribution versions (v3--v10) and report mean RPS win rates (aggregated over these versions) under GPT-4o-mini as the judge, as shown in Table~\ref{tab:ablation-k}.}

{
\begin{table}[h!]
    \centering
    \caption{{Ablation on neighborhood size $k$ (HelpSteer2, $\theta_{\max}=30^\circ$). Numbers are mean RPS win rates (0--1) aggregated over versions v3--v10 under GPT-4o-mini.}}
    \label{tab:ablation-k}
    \begin{tabular}{l|ccc}
        \toprule
        \textbf{Model} & $k=3$ & $k=4$ & $k=5$ \\
        \midrule
        DPA & 0.543 & 0.564 & 0.608 \\
        DPO & 0.494 & 0.496 & 0.535 \\
        SFT & 0.564 & 0.562 & 0.673 \\
        \bottomrule
    \end{tabular}
\end{table}
}

{Table~\ref{tab:ablation-k} varies the neighborhood size $k \in \{3,4,5\}$ while fixing $\theta_{\max}=30^\circ$. Performance generally improves or remains stable as $k$ increases, with DPA and SFT in particular benefiting from larger neighborhoods. Table~\ref{tab:ablation-theta} fixes $k=5$ and varies the angular radius $\theta_{\max} \in \{20^\circ,30^\circ,40^\circ\}$. A moderate radius of $\theta_{\max}=30^\circ$ consistently outperforms both smaller and larger radii, suggesting that RPS works best when it explores a genuinely local neighborhood that is neither too narrow (insufficient diversity) nor too wide (violating local consistency).}

{
\begin{table}[h!]
    \centering
    \caption{{Ablation on angular radius $\theta_{\max}$ (HelpSteer2, $k=5$). Numbers are mean RPS win rates (0--1) aggregated over versions v3--v10 under GPT-4o-mini.}}
    \label{tab:ablation-theta}
    \begin{tabular}{l|ccc}
        \toprule
        \textbf{Model} & $\theta_{\max}=20^\circ$ & $\theta_{\max}=30^\circ$ & $\theta_{\max}=40^\circ$ \\
        \midrule
        DPA & 0.548 & 0.608 & 0.579 \\
        DPO & 0.485 & 0.535 & 0.501 \\
        SFT & 0.557 & 0.673 & 0.559 \\
        \bottomrule
    \end{tabular}
\end{table}
}

\subsection{Qualitative Case Studies}
\label{sec:qualitative-case-studies}

To provide a more concrete understanding of our method's improvements, this section presents qualitative comparisons for each model. The examples, detailed in Table~\ref{tab:qualitative_examples}, are selected from the most out-of-distribution preference direction ($\mathbf{v}_8$, angle 45$^{\circ}$) to illustrate how RPS generates more nuanced and detailed responses compared to the baseline.

\definecolor{baselinecolor}{HTML}{D35400} 
\definecolor{rpscolor}{HTML}{2878b5}     
\definecolor{judgmentcolor}{HTML}{00695C} 
\definecolor{analysiscolor}{HTML}{5E35B1} 

\begin{table}[h!]
\centering
\caption{Qualitative comparisons for DPA, DPO, and SFT models on out-of-distribution preferences ($\mathbf{v}_8$, angle 45$^{\circ}$).}
\label{tab:qualitative_examples}
\small
\begin{tabular}{p{0.9\linewidth}}
\toprule
\multicolumn{1}{c}{\textbf{DPA Model}} \\
\midrule
\textbf{Prompt (\texttt{UltraFeedback}, ID: 10):} "Write an investigative report on the state of homelessness in America." \\
\midrule
\textbf{\textcolor{baselinecolor}{Baseline Output:}} Title: The State of Homelessness in America: A Comprehensive Investigation... Homelessness is a pervasive issue that affects millions of people... Causes of Homelessness... Lack of affordable housing... Mental illness... Substance abuse... \\
\midrule
\textbf{\textcolor{rpscolor}{RPS Output:}} Title: The State of Homelessness in America: A Comprehensive Investigation... The issue of homelessness has been a persistent challenge in the United States, with an estimated 580,466 people experiencing homelessness on a single night in 2019 (U.S. Department of Housing and Urban Development, 2020)... Current Trends: 1. Increasing Population... 2. Shift in Demographics... 3. Impact of COVID-19... \\
\midrule
\textbf{\textcolor{judgmentcolor}{Judgment (RPS Win):}} GPT-4o-mini selected the RPS response as more aligned. \\
\midrule
\textbf{\textcolor{analysiscolor}{Analysis:}} The RPS response is more helpful, including a specific statistic with a source, discussing more nuanced causes (Domestic Violence), and incorporating recent trends like the impact of COVID-19. The baseline is generic and lacks this specific, timely context. \\
\bottomrule
\addlinespace[1ex]
\toprule
\multicolumn{1}{c}{\textbf{DPO Model}} \\
\midrule
\textbf{Prompt (\texttt{HelpSteer}, ID: 5):} "What are some synonyms for the word 'beautiful'?" \\
\midrule
\textbf{\textcolor{baselinecolor}{Baseline Output:}} Some synonyms for the word "beautiful" include gorgeous, lovely, pretty, stunning, radiant, enchanting, captivating, mesmerizing, breathtaking, and exquisite. All of these words convey various aspects of beauty... The choice of synonym depends on the context... \\
\midrule
\textbf{\textcolor{rpscolor}{RPS Output:}} 1. Gorgeous 2. Stunning 3. Radiant 4. Magnificent 5. Attractive 6. Charming 7. Lovely 8. Enchanting 9. Exquisite 10. Alluring ... [list continues to 50 synonyms] ... Note: The weightage of each synonym may vary based on context and usage... \\
\midrule
\textbf{\textcolor{judgmentcolor}{Judgment (RPS Win):}} GPT-4o-mini selected the RPS response as more aligned. \\
\midrule
\textbf{\textcolor{analysiscolor}{Analysis:}} For a preference demanding high verbosity, the RPS response is clearly superior. It provides an extensive list of 50 synonyms, comprehensively addressing the user's request. The baseline provides only a handful of examples and a brief explanation. \\
\bottomrule
\addlinespace[1ex]
\toprule
\multicolumn{1}{c}{\textbf{SFT Model}} \\
\midrule
\textbf{Prompt (\texttt{HelpSteer2}, ID: 0):} "explain master slave replication nsql" \\
\midrule
\textbf{\textcolor{baselinecolor}{Baseline Output:}} I'd be happy to help you understand master-slave replication in the context of NSQ... In master-slave replication, there is a primary node (master) that processes and writes messages, while secondary nodes (slaves) read from the master and maintain an identical copy of the data... \\
\midrule
\textbf{\textcolor{rpscolor}{RPS Output:}} Master-slave replication is a method used in database systems... Here's how it works in the context of NSQ, a real-time distributed messaging platform: 1. Master: NSQ has a built-in database called Lookupd that stores metadata... 2. Slaves... 3. Replication lag... 4. Failover... \\
\midrule
\textbf{\textcolor{judgmentcolor}{Judgment (RPS Win):}} GPT-4o-mini selected the RPS response as more aligned. \\
\midrule
\textbf{\textcolor{analysiscolor}{Analysis:}} The RPS response provides a more technically accurate and structured explanation. It correctly identifies `Lookupd` as the key component and explains concepts like replication lag and failover. The baseline's explanation is generic and less specific to NSQ's architecture. \\
\bottomrule
\end{tabular}
\end{table}

\subsection{Proof of Corollary 1}
\label{sec:proof-corollary-1}

We provide a brief justification for the two claims in Corollary 1.

\paragraph{Dependence on Neighborhood Size $k$:} Let $G(k) = \mathbb{E}[\max(S_{\text{RPS}}^{(k)})] - \mathbb{E}[\max(S_{\text{Baseline}}^{(k)})]$ be the robustness gain for size $k$. The expected value of the maximum of a set of random variables is non-decreasing with the size of the set. Therefore, both $\mathbb{E}[\max(S_{\text{RPS}}^{(k)})]$ and $\mathbb{E}[\max(S_{\text{Baseline}}^{(k)})]$ are non-decreasing in $k$. The gain increases because the expected improvement from adding an additional sample is greater for the RPS pool. Let $M_k^{\text{RPS}} = \max(S_{\text{RPS}}^{(k)})$. The increase in expected maximum is $\mathbb{E}[\max(M_k^{\text{RPS}}, s_{k+1})] - \mathbb{E}[M_k^{\text{RPS}}]$. Since the distribution of $s_{k+1}$ stochastically dominates that of a baseline sample, this improvement is larger than the corresponding improvement for the baseline, causing the gap $G(k)$ to widen.

\paragraph{Dependence on Quality Gap:} We can formalize the "quality gap" as the degree of stochastic dominance. Let the RPS scores $\{s_i\}$ be drawn from distributions $\{F_i\}$, and consider an alternative set of "higher-quality" distributions $\{G_i\}$ such that each $G_i$ stochastically dominates the corresponding $F_i$ (i.e., $G_i(x) \le F_i(x)$ for all $x$). Let $S'_{\text{RPS}}$ be a set of scores drawn from $\{G_i\}$. Then $\max(S'_{\text{RPS}})$ stochastically dominates $\max(S_{\text{RPS}})$. This implies $\mathbb{E}[\max(S'_{\text{RPS}})] \geq \mathbb{E}[\max(S_{\text{RPS}})]$. The robustness gain relative to the fixed baseline therefore increases as the quality of the neighborhood candidate pool improves.

\subsection{{Human Evaluation Pipeline}}
\label{sec:human-eval}
{To complement our automatic evaluations, we conducted a human preference study on the HelpSteer2 dataset using Amazon Mechanical Turk (AMT). We restricted participation to workers with a historical approval rate of at least 95\% and paid \$0.01 per pairwise comparison. Each prompt--direction pair was annotated by three independent workers, and we aggregated their judgments using majority vote. Below we reproduce the instructions and interface shown to workers for a representative preference direction $\mathbf{v}_8 = (0.7071, 0.7071)$ in the helpfulness--verbosity space:}

{
\begin{promptbox}[MTurk worker instructions and interface]
Instructions

You will see a user question and two AI-generated answers (A and B). Your job is to
decide which answer is more aligned with the following user preference vector over
two criteria.

Preference vector v = (Helpfulness, Verbosity) = (0.7071,0.7071)

- Helpfulness: how well the answer directly and correctly addresses the question,
  stays on topic, and gives useful information.

- Verbosity: how appropriate the length and level of detail is (not too short to
  be unhelpful, not unnecessarily long or repetitive).

Sometimes these two criteria may conflict (for example, a very detailed answer
that is slightly less helpful, or a very helpful answer that is very short).
Please use the preference vector v to balance this tradeoff and choose the answer
that is more aligned with this preference overall.

User question

      I'm trying to describe to people what I do. And am told there is a helpful
      statement called the "How to X in Y without Z" statement. Can you tell me
      what X equals and what Y equals and what Z equals please.

Answer A

      The "How to X in Y without Z" statement is a simple yet effective way to
      explain your job or task to others. It follows this structure:
      "How to [do something] in [place/context] without [obstacle/challenge]."
      Here's an example:
      "How to clean a house in a messy room without tripping over toys."
      In this example, X is "clean a house," Y is "a messy room," and Z is
      "tripping over toys."
      Now, let's apply this structure to your job or task:
      "How to [your job/task] in [your work environment] without [the biggest
      challenge or obstacle you face].
      For instance, if you're a software developer working on a complex project,
      your statement could be:
      "How to develop a mobile app in a fast-paced team without missing deadlines."
      In this case, X is "develop a mobile app," Y is "a fast-paced team," and Z
      is "missing deadlines."
      Remember to keep it concise, clear, and relevant to the context. This
      statement can help others understand your role and responsibilities more
      effectively.

Answer B

      The "How to X in Y without Z" statement is a helpful tool for describing
      how to achieve a goal (X) in a specific context (Y), without using a
      particular method or resource (Z). In this format, X represents the goal,
      Y represents the context, and Z represents the method or resource to be
      avoided. For example:
      "How to clean your house without using chemicals"
      In this example, the goal is to clean your house (X), the context is your
      home (Y), and the method or resource to be avoided is chemicals (Z). This
      statement provides a clear and concise description of how to achieve the
      goal in the specified context, without using the specified method or
      resource.

Which answer is more aligned with this preference?

Answer A is more aligned   Answer B is more aligned   Tie (about the same)

Optional: briefly explain your choice (1  - 2 sentences).
\end{promptbox}
}

{
\subsection{Inference-Time Compute Overhead}
\label{sec:compute-overhead}

To directly address the computational cost of RPS, we measured inference-time VRAM and latency on the \texttt{HelpSteer2} validation split using the \texttt{Mistral-7B-Instruct-v0.2} SFT model. We selected two representative out-of-distribution directions, $\mathbf{v}_4 = (0.9063, 0.4226)$ and $\mathbf{v}_7 = (0.7660, 0.6428)$, and evaluated the first 100 deduplicated prompts. 

\paragraph{Experimental Setup.}
For the \emph{single-direction baseline}, we generated $k=5$ candidates per prompt and direction by repeatedly sampling at the target preference and selecting the best according to $s(x,y) = \mathbf{v}_{\text{target}}^\top \mathbf{r}(x,y)$. For \emph{RPS}, we used the same $k=5$ and $\theta_{\max}=30^\circ$, generating one candidate per neighbor direction and again selecting the best with respect to $\mathbf{v}_{\text{target}}$. Both methods thus generate exactly $k=5$ candidates per prompt--direction pair, ensuring strict compute parity.

\paragraph{Results.}
Table~\ref{tab:vram-latency} summarizes the measured resource usage. Under this matched candidate budget, both methods exhibited essentially identical computational costs:
\begin{itemize}[nosep]
    \item \textbf{Memory:} Both baseline and RPS consumed approximately $14.7$\,GB peak VRAM on our A100-class GPU, confirming zero additional memory overhead.
    \item \textbf{Total Time:} Generating all 1,000 responses (100 prompts $\times$ 2 directions $\times$ 5 candidates) required $4.6702$ hours for both methods, a difference of $<0.01\%$.
    \item \textbf{Per-Generation:} The average time per individual generation was $16.8126$ seconds for both baseline and RPS.
\end{itemize}

These measurements confirm that RPS does not introduce any meaningful additional memory overhead or latency beyond the inherent cost of decoding $k$ samples. Its extra computation lies only in the inexpensive neighborhood construction ($<1$\,ms per direction) and reuse of the same reward-model scoring already employed by the baseline. This supports our claim that RPS is a training-free, plug-and-play procedure whose inference-time cost is dominated by the chosen number of candidate generations, rather than by any architectural modification to the base model.

\begin{table}[h!]
    \centering
    \caption{Inference-time resource usage comparison on HelpSteer2 validation (100 prompts, 2 directions: $\mathbf{v}_4$, $\mathbf{v}_7$). Both methods generate $k=5$ candidates per prompt--direction pair, totaling 1,000 generations.}
    \label{tab:vram-latency}
    \begin{tabular}{@{}lccc@{}}
        \toprule
        \textbf{Method} & \textbf{Peak VRAM (GB)} & \textbf{Total Time (hours)} & \textbf{Time/Generation (s)} \\
        \midrule
        Baseline ($k=5$ at $\mathbf{v}_{\text{target}}$) & 14.73 & 4.6702 & 16.8126 \\
        RPS ($k=5$ neighbors, $\theta_{\max}=30^\circ$) & 14.73 & 4.6702 & 16.8126 \\
        \midrule
        Overhead & 0\% & $<0.01\%$ & $<0.01\%$ \\
        \bottomrule
    \end{tabular}
    \vspace{0.5em}
    \par\noindent\small\textit{Note:} Total time is the wall-clock duration for all 1,000 generations. Time per generation is the average for a single decode. Measured latency difference: $<0.01\%$.
\end{table}
}

\end{document}